\documentclass[11pt]{article}
\usepackage[margin=1.5in]{geometry}
\usepackage{amssymb}
\usepackage{amsmath}
\usepackage{color}
\usepackage[longend, linesnumbered]{algorithm2e}
\usepackage[utf8]{inputenc}
\usepackage{comment}
\usepackage{soul}
\usepackage{xcolor}
\usepackage{tikz}
\usetikzlibrary{positioning, shapes, arrows.meta}
\usepackage{standalone}
\usepackage{graphicx}
\usepackage{amsmath}
\usepackage{rotating}
\usepackage{pgfplots}
\usepackage{subcaption}
\usepackage{amsthm}
\pgfplotsset{compat=1.18}
\usepackage{tabularx}
\newtheorem{definition}{Definition}
\newtheorem{theorem}{Theorem}
\usepackage{float}

\usepackage{authblk}

\setlength{\affilsep}{2em}   

\title{DATAMUt: Deterministic Algorithms for Time-Delay Attack Detection in Multi-Hop UAV Networks}

\author[1]{Keiwan Soltani}
\author[2]{Federico Corò}
\author[3]{Punyasha Chatterjee}
\author[1]{Sajal K. Das}

\affil[1]{{Department of CS}, {Missouri University of Science and Technology}, {USA}}

\affil[2]{Department of Computer Science and Mathematics, University of Perugia, Italy}

\affil[3]{School of Mobile Computing and Communication, Jadavpur University, India}

\begin{document}

\maketitle

\begin{abstract}
Unmanned Aerial Vehicles (UAVs), also known as drones, have gained popularity in various fields such as agriculture, emergency response, and search and rescue operations. UAV networks are susceptible to several security threats, such as wormhole, jamming, spoofing, and false data injection. Time Delay Attack (TDA) is a unique attack in which malicious UAVs intentionally delay packet forwarding, posing significant threats, especially in time-sensitive applications. It is challenging to distinguish malicious delay from benign network delay due to the dynamic nature of UAV networks, intermittent wireless connectivity, or the Store-Carry-Forward (SCF) mechanism during multi-hop communication. Some existing works propose machine learning-based centralized approaches to detect TDA, which are computationally intensive and have large message overheads. This paper proposes a novel approach \emph{DATAMUt}, where the temporal dynamics of the network are represented by a weighted time-window graph (\emph{TWiG}), and then two deterministic polynomial-time algorithms are presented to detect TDA when UAVs have global and local network knowledge. 
Simulation studies show that the proposed algorithms have reduced message overhead by a factor of five and twelve in global and local knowledge, respectively, compared to existing approaches. 
Additionally, our approaches achieve approximately 860 and 1050 times less execution time in global and local knowledge, respectively, outperforming the existing methods.
\end{abstract}

\section{Introduction}\label{sec:intro}
Unmanned Aerial Vehicles (UAVs), or drones, are remotely operated or autonomous aircraft that are used in various applications such as surveillance, agriculture, and disaster relief \cite{9899444}. 
UAVs are cyber-physical systems (CPS) as they integrate computation (embedded software and control systems) with physical components (UAVs flying in the physical world) \cite{Vierhauser_2018}. UAVs in particular and CPS in general require monitoring capabilities to detect and possibly mitigate erroneous and safety-critical behavior at runtime.

For some specific applications like data collection, real-time monitoring, search and rescue, surveillance, and payload delivery, a collection of UAVs is used, which collaborates to gather information through multi-hop communication and relay it to a central ground station. These UAVs construct a flexible network, generally called flying ad-hoc networks or FANETs \cite{TSAO_2022}, to expand network coverage. UAV networks \cite{SHARMA_2023} are vulnerable to various attacks such as flooding \cite{10411184}, false data injection \cite{10115584}, location spoofing \cite{9398386}, jamming \cite{9508187}, wormhole \cite{Derui2018}, blackhole \cite{George2019}, grayhole \cite{Iván2019}, including Time Delay Attack (TDA) \cite{Zhai_2023_ETD}. 
\noindent

TDAs pose a significant threat to UAV networks, particularly in time-critical applications where real-time data is essential for effective decision making. 
In these attacks, malicious UAVs intentionally delay packet forwarding, disrupting the flow of information. Suppose a UAV $A$ sends a packet to $C$ through $B$.
Normally, $B$ would immediately forward the packet when encountered with $C$. However, malicious $B$ introduces a delay $\epsilon$, which hinders timely delivery. Fig. \ref{fig:TDA} illustrates the scenario.

This subtle tactic can have significant consequences. For example, in military reconnaissance, delayed intelligence could hinder strategic planning and jeopardize mission success \cite{ZHAI_2023_HOTD}. Similarly, in search and rescue operations, delayed communication could hinder rescue efforts, as illustrated in Fig. \ref{fig:cps-scenario}. 
In this scenario, a UAV relays an emergency message about an injured person on a mountain to a distant base station (BS) via multiple intermediate UAVs. The original green path represents the quickest route to the BS. However, the presence of a malicious node imposes a delay, which diverts the message along a slower red path. This can result in critical problems, including delayed emergency response, network congestion, and reduced system reliability. Such disruptions jeopardize the effectiveness of CPS, putting those in need of urgent assistance at risk. Even in civilian applications, such as package delivery \cite{9925214, 8825836} or traffic monitoring, these attacks can cause disruptions and economic losses.

\begin{figure}[ht]
    \centering
   \resizebox{\textwidth}{!}{%
    \begin{tikzpicture}
        \node at (0, 0) (uav1) {\includegraphics[width=1.2cm]{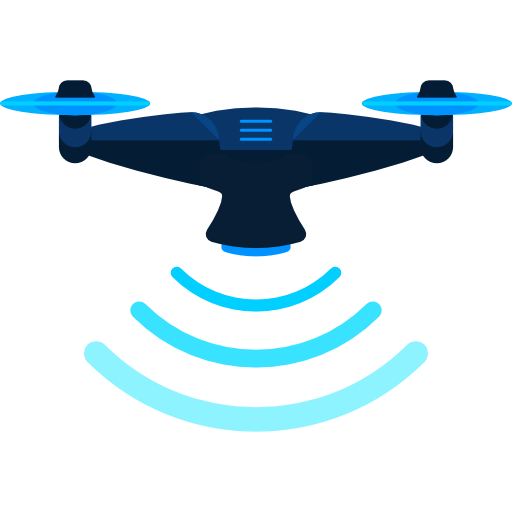}};
        \node at (2.13, -0.3) (uav2) {\includegraphics[width=1.2cm]{Figures/drone.png}};
        \node at (6.5, -1) (uav2_1) {\includegraphics[width=1.2cm]{Figures/drone.png}};
        \node at (8.8, -1.83) (uav2_2) {\includegraphics[width=1.2cm]{Figures/drone.png}};
        \node at (8.6, 0) (tower) {\includegraphics[width=1cm]{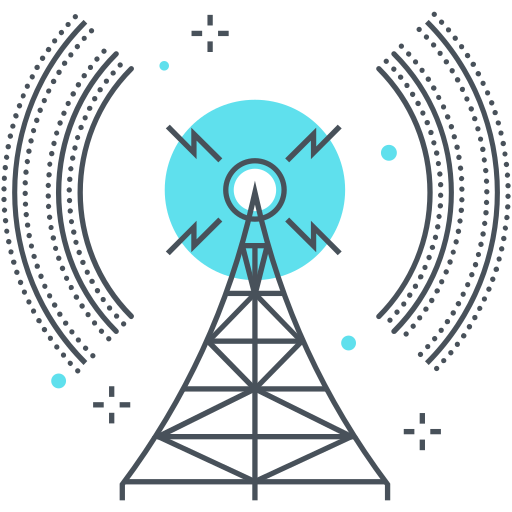}};
        
        \draw[red!80!black, thick, dashed] (0,0) circle(2.5cm);
        \draw[red!80!black, thick, dashed] (8.6,0) circle(2.5cm);

        \draw[->, black, thick](uav1) -- (uav2);
        \draw[->,black, very thick, dashed] (uav2) .. controls (4.5,-0.3) and (3,-1.3) .. (uav2_1);
        \draw[->, black, thick](uav2_1) -- (tower);
        \draw[->, black, very thick, dotted](uav2_1) -- (uav2_2) node[midway, above]{\LARGE $\epsilon$};
        \draw[->, black, very thick, dashed](uav2_2) -- (tower);

        \node[below=0.5 cm] at (uav1) {\huge $A$};
        \node[below=0.5 cm] at (uav2) {\huge $B$};
        \node[below=0.6 cm] at (uav2_1) {\huge$B^{\prime}$};
        \node[below=0.6 cm] at (uav2_2) {\huge $B^{\prime\prime}$};
        \node[above=0.5 cm] at (tower) {\huge $C$};

\end{tikzpicture}
}
    
    \caption{Normal and delay forwarding behaviors in the UAV network.}
    \label{fig:TDA}
\end{figure}

In addition, TDAs can act as a catalyst for other types of attacks such as omission attacks \cite{10261240}, false data injection attacks \cite{Ahmad_2020}, and data manipulation attacks \cite{Ahmad_2020}.

TDA detection is challenging in UAV networks due to the inherent characteristics, of the network such as dynamic topology, UAV mobility, and error-prone wireless communication. These can mask short delays, making it difficult to distinguish malicious behavior from normal network latency. Furthermore, the store-carry-forward (SCF) mechanism, where UAVs store and carry data until they encounter the intended recipient, can further obscure malicious delays. An attacker can exploit this mechanism to blend their delayed transmissions with legitimate delays caused by network conditions.

\begin{figure}[ht]
    \centering
    \vspace{-0.1in}
    \includegraphics[width=1\columnwidth]{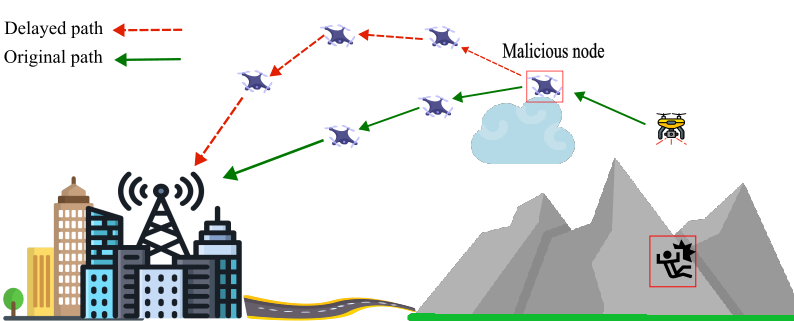}
    \caption{Effect of TDA in UAV network for emergency response}
    \label{fig:cps-scenario}
    \vspace{-0.1in}
\end{figure}



Existing research on TDA detection in UAV networks is limited and focuses mainly on centralized approaches \cite{Zhai_2023_ETD,ZHAI_2023_HOTD}. These methods often require extensive computational resources and multiple iterations to learn the network's behavior, making them unsuitable for time-sensitive scenarios.

To address these gaps, we propose polynomial-time, deterministic algorithms, both centralized and distributed, under the name \emph{DATAMUt} (\underline{D}eterministic \underline{A}lgorithms for \underline{T}ime-Delay \underline{A}ttack Detection in \underline{M}ulti-Hop \underline{U}AV Ne\underline{t}works), for detecting TDAs in UAV networks. These algorithms have lower overhead and require only a single run to identify malicious UAVs, offering a significant improvement over existing approaches. Our contributions are as follows:
\begin{itemize}
    \item We introduce an algorithm to construct a weighted graph, named \emph{\underline{T}ime-\underline{Wi}ndow \underline{G}raph (TWiG)}, which effectively captures the temporal dynamics of UAV encounters in a multi-hop UAV network. 
    \item Based on \emph{TWiG}, we present an algorithm that leverages global knowledge of the network to identify malicious UAVs. This algorithm will serve as a benchmark for evaluating the performance of subsequent algorithms designed for more realistic scenarios.  
    \item We also propose a distributed algorithm for a more realistic scenario that operates with local knowledge, where each UAV only has information about its 2-hop neighborhood. This algorithm effectively detects TDAs, even with limited information.  
    \item Compare with the existing HOTD approach \cite{ZHAI_2023_HOTD}, we demonstrate that our approaches reduce message overhead by a factor of five and twelve in global and local knowldege, respectively. Additionally, our approaches achieve approximately 860 and 1050 times less execution time in global and local knowldge, respectively.
\end{itemize}

The rest of the paper is organized as follows: Section \ref{sec:Related-Work} reviews related works. Section \ref{sec:system_model} describes the system model. Section \ref{sec:problem_formulation} formulates the problem. Sections \ref{sec:TDA_Global} and \ref{sec:TDA-Local} present our proposed algorithms to detect malicious nodes using global and local knowledge scenarios, respectively. Section \ref{sec:Preformance-eval} evaluates the performance of our proposed methods. Finally, Section \ref{sec:Conclusion} concludes the paper.


\section{Related Works} \label{sec:Related-Work}
In this section, we describe the existing works on TDA in UAV networks. 
Although TDA is a well-known attack in wired networks and static wireless networks, such as CPS based smart power grids \cite{9352977, 8909732}, industrial control systems \cite{10752572}, IoT networks \cite{Zhao_2023}, cellular networks \cite{Hamici_2024}, time-sensitive networks (TSN) \cite{LUO_2023}, wireless sensor networks \cite{1542869}, and power systems \cite{9173568, 9813456}, there is a limited number of articles that specifically address TDA in UAV networks. The unique characteristics, such as 3D mobility and the SCF mechanism of UAV networks, make time delay attacks more covert, complicated, and destructive than those in other networks.

The authors in \cite{Zhai_2023_ETD} propose a time-delay attack detection framework (ETD) for UAV networks. They introduce a framework to take advantage of delays related to four dimensions: delay, node, message, and connection, while using UAV trajectory information to calculate forwarding delays. The detection model is trained using one-class classification, and malicious nodes are distinguished using the K-Means clustering algorithm based on trust values. 

The authors in \cite{ZHAI_2023_HOTD} introduce a cross-layer time-delay attack detection (HOTD) framework for UAV networks. HOTD addresses TDA by collecting delay-related features across various layers of the UAV network protocol, using supervised learning to build a model between these features and the forwarding delay. The framework utilizes clustering to distinguish between malicious and benign nodes based on their degree of consistency.

The above approaches have significant limitations. UAVs need to attach considerable information about intermediate UAVs to each transmitted message, which increases network overhead and exposes critical data. This exposure raises security risks, making the network more vulnerable to additional types of attacks, such as eavesdropping, false data injection, and replay attacks. In addition, these machine learning-based methods rely on long-term data collection, which often requires numerous iterations to accumulate sufficient information. This increases computational complexity. This complexity not only extends the detection process but also delays the identification of malicious nodes, allowing attackers to disrupt network operations before they are detected. However, these approaches are designed for centralized scenarios in which a central base station manages and processes messages, making them unsuitable for distributed UAV networks.

\section{System Model}\label{sec:system_model}

In this section, we describe the network model and TDA model, followed in this paper.

\begin{table}
  \centering
  \caption{Symbol Table}
  \label{tab:symbol}
  \begin{tabular}{|l|p{6cm}|}
    \hline
    \textbf{Symbols} & \textbf{Meaning} \\
    \hline
    $U$ & Set of UAVs\\
    \hline
    $T$ & Set of towers \\
    \hline
    $\mathcal{N}$ & Set of nodes\\
    \hline
    $n_i$ & Node id of node $i \in \mathcal{N}$\\
    \hline
    $(x_i^t, y_i^t, z_i^t)$ & coordinates of node $i \in \mathcal{N}$ at time $t$\\
     \hline
    $r_i$ & Communication range of node $i \in \mathcal{N}$\\
    \hline
     $s$ & Source node\\
    \hline
    $d$ & Destination node\\
    \hline
    $tw$ & time window\\
    \hline
    $t_{dur(n_i,n_{i+1})}$ & \emph{time window} duration between $n_i$, $n_{i+1}$\\
    \hline
    $\beta_{n_i,n_{i+1}}$ &  encounter end time between $n_i$, $n_{i+1}$\\
    \hline
    $\tau_{n_i,n_{i+1}}$ & encounter start time between $n_i$, $n_{i+1}$\\
    \hline
    $t_{n_i}^s$ & sending time of node $n_i$\\
    \hline
    $Rt_{n_i}$ & receiving time of node $n_i$\\
    \hline
    $\epsilon$ & time delay\\
    \hline
     $t_{n_i}^{s'}$ & delayed sending time of node $n_i$\\
    \hline
    $Rt_{n_i}^{'}$ & delayed receiving time of node $n_i$\\
    \hline
    $t_{tr}$ & packet transmission duration\\
    \hline
    $p_m$ & Benign routing path of packet $m$\\
    \hline
    $p'_m$ & Malicious routing path of packet $m$\\
    \hline
    $1Nb_i$ & 1-hop neighborhood of node $i$\\
    \hline
    $2Nb_i$ & 2-hop neighborhood of node $i$\\
    \hline
     $SP$ & shortest path\\
    \hline
      $FP$ & Followed path in \emph{TWiG}\\
    \hline
       $P$ & All paths\\
    \hline
    $w$ & edge weight\\
    \hline
     $\Psi$  & Malicious Nodes \\
    \hline

  \end{tabular}  
\end{table}

\subsection{Network Model}
Let us consider a 3D environment where there is a set of UAVs ($U$) and a set of towers/ground stations ($T$). UAVs can fly at different altitudes following a predefined trajectory, determined through route and task planning \cite{8413129,8618602,9408071}, and towers are fixed on the ground. 
We refer to towers and UAVs as \emph{nodes}.
So, we consider a set of nodes $\mathcal{N}=U\ \cup\ T$, where each node $i \in \mathcal{N}$ is associated with a node id $n_i$, the geographical location at any time-instant $t$, denoted as $(x_i^t, y_i^t, z_i^t)$ and a communication range $r_i$. For any node $i \in T$, $(x_i^t$, $y_i^t, z_i^t)$ is time-invariant and $z_i^t=0$. 
While flying through a trajectory, a UAV may encounter other nodes within its communication range at different time intervals/windows. 

\begin{definition}{\bf Time Window:} 
A \emph{time window} $tw$ between two nodes $n_i$ and $n_{i+1}$ is represented as $tw$=($n_i, n_{i+1}$, $\tau_{n_i,n_{i+1}}$, $\beta_{n_i,n_{i+1}})$, where $\tau_{n_i,n_{i+1}}$ and $\beta_{n_i,n_{i+1}}$ correspond to the start and end times of their encounter. The duration of $tw$ is computed as  $t_{dur(n_i,n_{i+1})} = \beta_{n_i,n_{i+1}} - \tau_{n_i,n_{i+1}}.$ 
\end{definition}

It should be noted that there might be multiple \emph{time windows} between a pair of nodes, as they may encounter more than once while traveling on their trajectories.
\begin{definition}{\bf Neighborhood:} 
Each node $n_i$ has a neighborhood $1Nb_{n_i}$, which is defined as the set of nodes it encounters while flying on its trajectory. We call it \emph{1-hop neighborhood}. Similarly, we can define the \emph{2-hop neighborhood} of node $n_i$, i.e., $2Nb_{n_i}$ = $\{1Nb_{n_i}\ \cup$ \emph{1-hop neighbors} of $1Nb_{n_i}$\}.
\end{definition}

\noindent
When a UAV wants to transmit a message to a specific ground station, direct transmission may not always be possible due to the limited communication ranges of the nodes. In that case, UAVs can communicate with neighboring nodes in specific \emph{time windows}, forming a multi-hop network to relay packets to the destination. We assume that the \emph{time windows} are sufficiently large enough to accommodate one packet transmission to support the SCF feature of a multi-hop UAV network. Let us assume that $t_{tr}$ denotes the duration to transmit a packet. Then $t_{dur} >= t_{tr}$. 

\begin{definition}{\bf Path:}
    Suppose a UAV $s_m$ (source) wants to send a packet $m$ to a tower $d_m$ (destination). The path of packet $m$ can be represented as $p_m= (s_m, n_1, n_2, \ldots, d_m)$, where $n_i$ denotes the intermediate nodes between $s_m$ and $d_m$. There can be multiple \emph{paths} between a \emph{source} and \emph{destination}. The set of all \emph{paths} between $s_m$ and $d_m$ is denoted as $P$.
\end{definition}

\begin{definition}
\textbf{Shortest Path:}
    The path that minimizes the total transmission time for delivering a packet $m$ from the source node $s_m$ to the destination node $d_m$ is called the shortest path $SP_m \in P$.
\end{definition}

For example, in Fig. \ref{fig:UAV-net}, we design a UAV network comprising seven nodes, including five UAVs $U= \{S, A, C, D, E \}$  and two towers $T=\{B, F\}$. Trajectories are drawn in red, blue, orange, green, and pink lines. $tw_1$ denotes one of the \emph{time windows} between nodes $S$ and $A$, where $\tau_{S,A}=5s$ and $\beta_{S,A}=8s$. The \emph{1-hop neighborhood} and \emph{2-hop neighborhood} of UAV $A$ are \{$S, B$\} and \{$S, B, D, C$\}, respectively. 
If node $S$ wants to send a message $m$ to node $F$, one possible \emph{path} followed by the message is $p_m = \{S, A, B, D, C, F\}$. But the \emph{shortest path} is $SP_m  = \{S, A, B, C, F\}$.

\begin{figure}[ht]
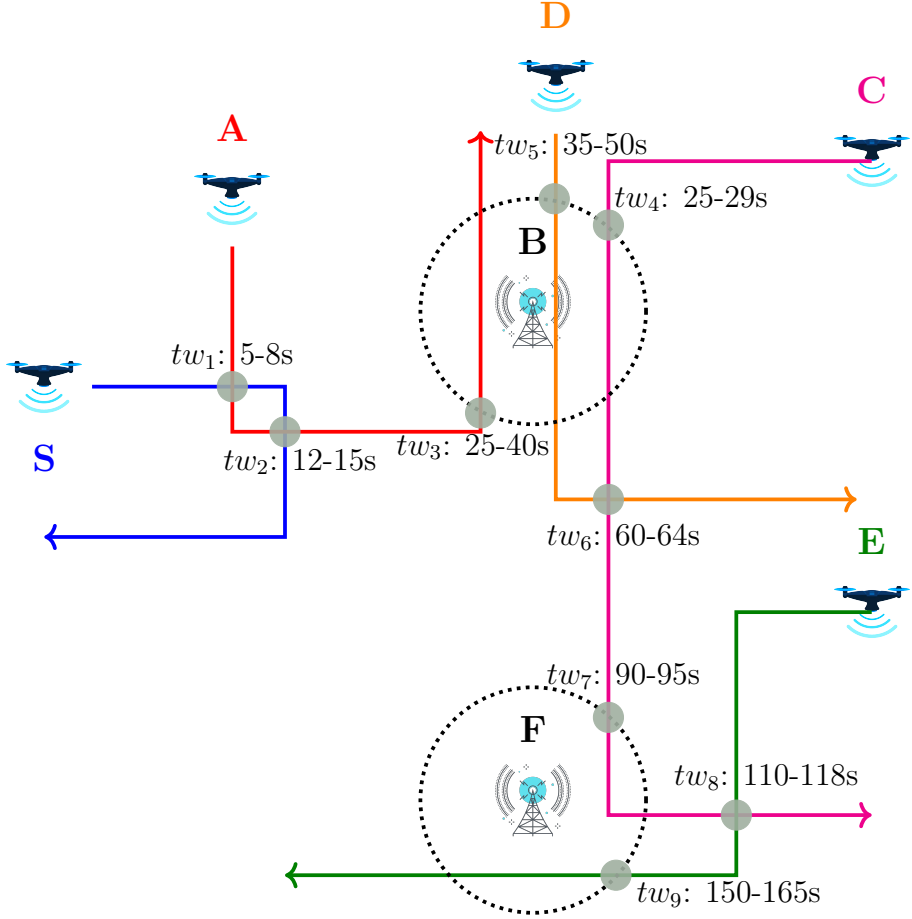

    \centering
    
    \begin{tikzpicture}

\node at (0, 0) (sender) {\includegraphics[width=1cm]{Figures/drone.png}};
\node[below=0 cm of sender] {\Large\textbf{\textcolor{blue}{S}}};

\node at (2.5, 2.5) (A) {\includegraphics[width=1cm]{Figures/drone.png}};
\node[above=0 of A] {\Large\textbf{\textcolor{red}{A}}};

\node at (6.5, 1) (B) {\includegraphics[width=1cm]{Figures/tower2.png}};
\node[above=0.cm of B] {\Large\textbf{B}};

\node at (11, 3) (C) {\includegraphics[width=1cm]{Figures/drone.png}};
\node[above=0 cm of C] {\Large\textbf{\textcolor{magenta}{C}}};

\node at (6.8, 4) (D) {\includegraphics[width=1cm]{Figures/drone.png}};
\node[above=0 cm of D] {\Large\textbf{\textcolor{orange}{D}}};

\node at (11, -3) (E) {\includegraphics[width=1cm]{Figures/drone.png}};
\node[above=0 cm of E] {\Large\textbf{\textcolor{green!50!black}{E}}};

\node at (6.5, -5.5) (F) {\includegraphics[width=1cm]{Figures/tower2.png}};
\node[above=0 cm of F] {\Large\textbf{F}}{};



\draw[line width=0.5mm, blue, ->] (sender) -- ++(3.2, 0) -- ++(0, -2) -- ++(-3.2, 0);  

\draw[line width=0.5mm, red, ->] (A) -- ++(0, -3.1) -- ++(3.3, 0) -- ++(0, 4);  
\draw[line width=0.5mm, orange, ->] (D) -- ++(0, -5.5) -- ++(4, 0) ;  
\draw[line width=0.5mm, magenta, ->] (C) -- ++(0, 0) -- ++(-3.5, 0) -- ++(0, -8.7) -- ++(3.5,0);  
\draw[line width=0.5mm, green!50!black, ->] (E) -- ++(0, 0) -- ++(-1.8, 0) -- ++(0, -3.5) -- ++(-6,0);  

\draw[dotted, line width=0.5mm] (B) circle (1.5cm);   
\draw[dotted, line width=0.5mm] (F) circle (1.5cm);   

\definecolor{dimgray}{RGB}{160, 175, 160} 

\filldraw[dimgray, opacity=0.9] (2.5,0) circle (0.2cm);  
\filldraw[dimgray, opacity=0.9] (3.2, -0.6) circle (0.2cm);  
\filldraw[dimgray, opacity=0.9] (5.8, -0.35) circle (0.2cm);  
\filldraw[dimgray, opacity=0.9] (6.78, 2.5) circle (0.2cm);  
\filldraw[dimgray, opacity=0.9] (7.5, 2.15) circle (0.2cm);  
\filldraw[dimgray, opacity=0.9] (7.5, -1.5) circle (0.2cm);  
\filldraw[dimgray, opacity=0.9] (7.5, -4.4) circle (0.2cm);  
\filldraw[dimgray, opacity=0.9] (9.2, -5.7) circle (0.2cm);  
\filldraw[dimgray, opacity=0.9] (7.6, -6.5) circle (0.2cm);  

\node at (2.5, 0.4) {\large {$tw_1$: 5-8s}};
\node at (3.4, -1) {\large{$tw_2$: 12-15s}};
\node at (5.7, -0.8) {\large{$tw_3$: 25-40s}};
\node at (8.6, 2.5) {\large{$tw_4$: 25-29s}};
\node at (7, 3.2) {\large{$tw_5$: 35-50s}};
\node at (7.7, -2) {\large{$tw_6$: 60-64s}};
\node at (7.7, -3.85) {\large{$tw_7$: 90-95s}};
\node at (9.6, -5.2) {\large{$tw_8$: 110-118s}};
\node at (9.1, -6.75) {\large{$tw_9$: 150-165s}};

\end{tikzpicture}
    
    \caption{Example of a UAV network}
    \label{fig:UAV-net}\vspace{-0.05in}
\end{figure}

\subsection{Time Delay Attack Model}

The formal description of TDA, outlined in \cite{Zhai_2023_ETD}, is as follows. In a normal scenario, a node $n_i$ transmits a packet $m$ to node $n_{i+1}$, upon encountering $n_{i+1}$, i.e., at time $\tau_{n_i,n_{i+1}}$. Hence, the time, when node $n_i$ begins to transmit the packet $m$ to node $n_{i+1}$, is denoted as $t_{n_i}^s=\tau_{n_i,n_{i+1}}$. Node $n_{i+1}$ receives the packet successfully at time $Rt_{n_{i+1}}=t_{n_i}^s+t_{tr}$, where $t_{tr}$ denotes the duration to transmit the packet.  We assume that any benign delay due to communication link instability is considered and included in $t_{tr}$.
This assumption allows us to focus on significant delays that may indicate malicious activity while accounting for the inherent communication overhead.

In a TDA, a node is malicious if it deliberately imposes a delay $\epsilon$ in transmitting the data packet to the next node. 
In addition, there may be more than one malicious node within the UAV network. In this situation, it is important to note that TDA can also change the original transmission path (followed in the normal scenario), as a node might miss the encounter \emph{time window} of the next node on the original path. For example, on a path from $S$ to $F$ in Fig. \ref{fig:UAV-net}, if node $B$ is malicious, it will hold the packet for a longer time and can miss the \emph{time window} with $C$. The possible malicious path from $S$ to $F$ could be $p'_m=\{S, A, B, D, C, F\}$, which is longer than the original path $p_m=\{S, A, B, C, F$\}. 
Under this scenario, the delayed sending time of the node $n_i$ and the delayed receiving time of the subsequent node $n_{i+1}$, represented as $t_{n_i}^{s\prime}$ and $Rt_{n_{i+1}}^{\prime}$, respectively, are computed as: 
\begin{equation} \label{eq:TDA_s}
   t_{n_i}^{s\prime}= \begin{cases}t_{n_i}^s + \epsilon, &  \space if  (\epsilon + t_{tr}) \leq t_{dur(n_i,n_{i+1})} \\
    \tau_{n_i,n_{i+1}}  + \epsilon, & Otherwise  \end{cases} ,
\end{equation}
\begin{equation} \label{eq:TDA_r}
   Rt_{n_{i+1}}^{\prime}=  t_{n_i}^{s\prime} + t_{tr}
\end{equation}

It should be noted that messages in a UAV network can be delayed due to other attacks too, such as position altering \cite{TSAO_2022}, wormhole \cite{Derui2018}, blackhole \cite{George2019}, or grayhole \cite{Iván2019}. However, we do not consider any combined attack in our scenario. Also, we assume that $t_{tr}$ is fixed for all messages. However, $t_{tr}$ can depend on the size of the message, the data transfer rate, and the quality of the wireless link.

The symbols used in the paper are listed in Table \ref{tab:symbol}. 

\section{Problem Formulation}\label{sec:problem_formulation}

For the fastest delivery, a packet has to traverse the shortest path (with respect to time) to the destination. The shortest path $SP_m$ that minimizes the total transmission time for delivering a packet $m$ from the source node $s_m$ to the destination node $d_m$ is:
\begin{equation} \label{eq:Spath}
   SP_m = arg \min_{p_i \in P(s_m,d_m)} \sum_{tw_j \in {Tw_p}_i} tw_j
\end{equation}

Where $P$ represents all possible paths from $s_m$ to $d_m$, and 
$Tw_{p_i}$ represents the \emph{time windows} of the \emph{path} $p_i$.
If the message passes through the \emph{shortest path}, then intermediate nodes in message transmission are benign. But if the traversed path is not the \emph{shortest path}, then a time delay attack has occurred, and some malicious nodes exist in the \emph{path}. 

\begin{theorem}\label{theorem:SP}
Given a message $m$ transmitted from the node $s_m$ to node $d_m$ along a path $p_m$, there exists at least one malicious node in $p_m$ if and only if $p_m \neq SP_m$.
\end{theorem}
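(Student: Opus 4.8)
The plan is to prove the two directions of the biconditional separately, each by contraposition, using one auxiliary fact about time-window graphs: when a packet is forwarded greedily (handed off over the earliest still-usable encounter window at every hop), the induced node sequence is exactly the path attaining the minimum in Eq.~\eqref{eq:Spath}. I would either cite this from the temporal / foremost-journey shortest-path literature or prove it by a short exchange argument, since in the global-knowledge model of Section~\ref{sec:TDA_Global} the routing layer is assumed to know the future encounter schedule.

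\textbf{$p_m \neq SP_m \Rightarrow$ a malicious node exists.} I prove the contrapositive: if every node on $p_m$ is benign, then $p_m = SP_m$. By the TDA model, a benign node $n_i$ transmits $m$ to its next hop $n_{i+1}$ at the start of their encounter, $t_{n_i}^{s}=\tau_{n_i,n_{i+1}}$, completing delivery at $\tau_{n_i,n_{i+1}}+t_{tr}$; and since $t_{dur}\ge t_{tr}$ no window is ever lost for a benign reason. Hence forwarding along $p_m$ is precisely the greedy, earliest-window hand-off, so by the auxiliary fact the induced sequence is $SP_m$, i.e., $p_m = SP_m$.

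\textbf{a malicious node exists $\Rightarrow p_m \neq SP_m$.} Again by contraposition: assume $p_m = SP_m$ and suppose toward a contradiction that some node on $p_m$ is malicious. Let $n_k$ be the first such node and let it inject $\epsilon > 0$. By Eqs.~\eqref{eq:TDA_s}--\eqref{eq:TDA_r}, $n_{k+1}$ receives $m$ strictly later than under benign behaviour. I then push this lateness downstream: either (i) $n_{k+1}$ misses the encounter window with the next node of $SP_m$ --- the ``Otherwise'' branch of Eq.~\eqref{eq:TDA_s} --- forcing the packet onto a strictly different, and by optimality of $SP_m$ strictly slower, continuation, contradicting $p_m = SP_m$; or (ii) every downstream window is still catchable, in which case the realized followed path $FP$ uses the same node sequence but a later window assignment, whose cost in Eq.~\eqref{eq:Spath} strictly exceeds that of $SP_m$, again contradicting that $p_m$ equals the cost-minimizer $SP_m$.

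\textbf{Main obstacle.} The weak point is case (ii): if $\epsilon$ is small and the downstream windows are loose, the extra delay can be absorbed (a later node waits for its next window anyway), so neither the node sequence nor even the delivery time need change, and the biconditional can fail for such $\epsilon$. To close the gap I would make explicit --- as the TDA model implicitly intends --- that a meaningful attack delay is one with $(\epsilon + t_{tr}) > t_{dur(n_k,n_{k+1})}$ at the malicious hop (placing us in branch~(i)), or, equivalently, interpret ``$SP_m$'' as the \emph{timed} object $FP$ with its window assignment, under which any strictly later hand-off is by definition a different path. Either choice removes the absorption loophole and reduces the proof to case~(i) together with the greedy-optimality fact.
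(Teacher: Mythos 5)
Your proposal does not follow the paper's argument, and the comparison is instructive in both directions. The paper's proof addresses only the forward implication ($p_m \neq SP_m \Rightarrow$ a malicious node exists), and does so directly and constructively: scanning from the destination backwards, it takes the first node $n_i$ whose suffix deviates from the shortest path to $d_m$, observes that since the suffix from $n_{i+1}$ is already shortest, $n_i$ must have had out-degree at least two and chose an edge off its shortest path, and concludes $n_i$ is malicious; this localization is exactly what Algorithm~\ref{alg:Detect-malicious-node} implements. You instead prove the contrapositive via an auxiliary claim that greedy earliest-window forwarding realizes the minimizer of \eqref{eq:Spath}. That claim is false in general temporal graphs: handing the packet off over the earliest still-usable encounter can deliver it to a relay with poor onward connectivity, so the greedy journey need not be foremost. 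What actually closes this direction is the paper's implicit definition of benign behavior as choosing the shortest-path next hop and transmitting at the start of the encounter window, under which ``all benign $\Rightarrow p_m = SP_m$'' is nearly definitional; if you adopt that assumption explicitly, your contrapositive goes through without the greedy lemma.

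On the reverse implication you have identified a genuine defect that the paper's proof silently omits: a malicious node whose delay satisfies $(\epsilon + t_{tr}) \leq t_{dur(n_i,n_{i+1})}$ (the first branch of \eqref{eq:TDA_s}) still catches the same window, the node sequence is unchanged, and $p_m = SP_m$ despite the attack --- so the ``only if'' direction is false as stated. Your proposed repairs (require the delay to exceed the window slack at the malicious hop, or read $SP_m$ as the timed path together with its window assignment) are both reasonable ways to make the biconditional true; the paper never argues this direction at all, so your analysis goes beyond it. One caution on your case (i): when a window is missed, the paper attributes the deviation to the node that ``had a choice,'' claiming this is ``independent of the choices made before $n_i$,'' but an upstream delay can rob a downstream benign node of the very window its shortest path required, so that step deserves the same scrutiny you applied to case (ii).
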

\begin{proof}
Given a \emph{Path} of a message \(p_m = \{s_m,n_i, \dots, n_k, d_m\}\), where $n_i \in \mathcal N$, since the receiver cannot be the malicious node, we only consider the nodes from $n_{k}$ to $n_i$.
Assume that $n_i$ is the first node s.t.  \emph{Path}$(n_i,d_m)$ is not equal to the \emph{Shortest Path}$(n_i, d_m)$.
Note that this implies that all nodes between $n_{i+1}$ and $n_{k}$ are benign nodes.%

\noindent
Since \emph{Path}$(n_i, d_m)$ is not equal to the \emph{Shortest Path} $(n_i, d_m)$, but \emph{Path} $(n_{i+1}, d_m)$ is equal to the \emph{Shortest Path} $(n_{i+1}, d_m)$, it means that the out-degree $(n_i) \geq 2$ (i.e., there are at least two \emph{Paths} from $n_i$ to $d_m$). This implies that $n_i$ chose an edge that was not part of the \emph{Shortest Path}$(n_i, d_m)$ and is, therefore, a malicious node.
Otherwise, $n_i$ would have chosen as the next node a node belonging to the \emph{Shortest Path}$(n_i, d_m)$.
This is independent of the choices made before $n_i$, indeed even if the message was also delayed before reaching node $n_i$, the node still had a choice between the \emph{Shortest Path} and not.

So, given a \emph{Path} of a message between node $s_m \in \mathcal N$  and node $d_m \in \mathcal N$ , if the \emph{Path} of the message is not equal to the \emph{Shortest Path} between $(s_m,d_m)$, then at least one of the nodes (including $s_m$ and excluding $d_m$) in the path is malicious.s
\end{proof}

\noindent
While Theorem \ref{theorem:SP} provides a fundamental basis for the detection of malicious nodes based on the deviation from the shortest path, its practical application requires careful consideration of several factors.

First, the theorem's efficacy can be compromised by colluding malicious nodes. If multiple malicious nodes coordinate their actions and deliberately choose the same non-shortest path, they can effectively mask their malicious intent. This collusion can lead to false negatives, where malicious nodes remain undetected despite causing delays.  Developing strategies to identify such collusion is crucial for the robust detection of malicious nodes.
Second, the theorem assumes the availability of alternative paths between the source and destination. In scenarios with limited route options or the presence of cut edges in the network, the ability to identify malicious nodes definitively diminishes.  Specifically, if a node has an out-degree of 1 in the subgraph induced by the message path, it becomes impossible to ascertain whether its forwarding decision was malicious or simply constrained by the network topology.  Therefore, adapting the detection mechanism to account for these topological constraints is essential. 
Finally, the theorem operates under the assumption of global network knowledge, where nodes have complete information about network topology and time windows. However, in realistic drone networks, individual drones typically possess only local information about their neighbors.

To bridge the gap between theory and practice, we proceed in two stages. First, we present a deterministic polynomial-time algorithm for detecting malicious nodes in the idealized "Global Knowledge" scenario. 
This algorithm will serve as a benchmark for evaluating the performance of subsequent algorithms designed for more realistic scenarios.  Subsequently, we will delve into the "Local Knowledge" scenario and develop strategies to address the challenges posed by limited information and potential collusion among malicious nodes.

\section{TDA Detection using Global Knowledge}\label{sec:TDA_Global}
Here, we assume that all nodes in the network possess information about \emph{time windows} used to meet other nodes in the global network. A message contains information ($info_i$) about the nodes traversed in the path and the \emph{reception time} of the message, as illustrated in Fig. \ref{fig:message-Info-A}.  
After receiving a message, the receiver detects the malicious node(s) by following two steps: (a) Constructing a \emph{Time-Window Graph (TWiG)} from the predefined \emph{time-windows}. (b) Detecting malicious node(s) from \emph{TWiG} by comparing the shortest path ($SP_m$) and the route path ($p_m$) of a message $m$.

\begin{figure}[ht]
    \centering
\resizebox{\textwidth}{!}{%
    \begin{tikzpicture}
    \draw[thick] (0,0) rectangle (16.6,-1);
    \node at (3,-0.5) {\huge $Message$};

    \draw[thick] (7,-0) -- (7,-1);
    \node at (7.5,-0.5) {\Large$s_m$};

    \draw[thick] (8,-0) -- (8,-1);
    \node at (8.5,-0.5) {\Large$d_m$};

    \draw[thick] (9,-0) -- (9,-1);
    \node at (9.8,-0.5) {\Large$Info_1$};d

    \draw[thick] (10.5,-0) -- (10.5,-1);
    \node at (11.3,-0.5) {\Large$Info_2$};

    \draw[thick] (12,-0) -- (12,-1);
    \node at (12.8,-0.5) {\Large$ \dots$};
    \draw[thick] (13.5,-0) -- (13.5,-1);
    \node at (14.3,-0.5) {\Large$Info_i$};
    \draw[thick] (15,-0) -- (15,-1);
    \node at (15.8,-0.5) {\Large$Info_k$};

    \draw[thick] (9,-2) rectangle (11.2,-3);
    \node at (9.4,-2.5) {\Large $n_i$};
    \draw[thick] (10,-2) -- (10,-3);
    \node at (10.5,-2.5) {\Large$Rt_{n_i}$};

    \draw[line width= 0.5 mm, ->] (14.5,-1) .. controls (14.7,-1.9) and (10,-1) .. (10,-2);

\end{tikzpicture}
}
    \caption{Message and attached information.}
    \label{fig:message-Info-A}

\end{figure}
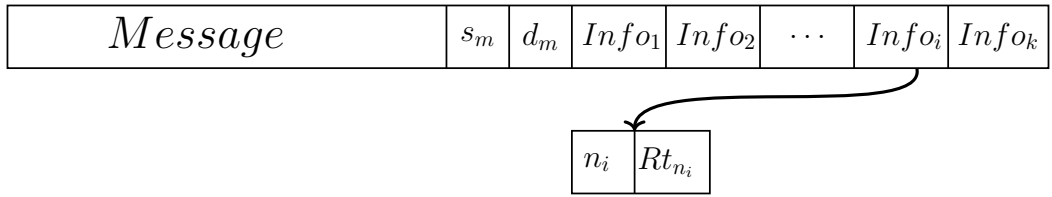

\subsection{Constructing Time-Window Graph}

From a given set of time windows $Tw=\{tw_1, \ldots, tw_n\}$, corresponding to a UAV network, Alg. \ref{alg:Full-Knowledge-Graph} constructs a weighted graph $G=(V, E, W)$ called \emph{Time-Window Graph} (\emph{TWiG}), where $V$, $E$, and $W$ are the vertex set, edge set, and weights associated with the edges.

The algorithm systematically checks all \emph{time windows} to identify potential overlaps.
To detect overlapping \emph{time windows}, the algorithm splits each of the two overlapping windows into four distinct \emph{time windows}. 
Next, two vertices are created for each \emph{time window}. 
An undirected edge is established between two vertices corresponding to a \emph{time window}, such as ($n_{x_i}^{tw}$, $n_{x_j}^{tw}$), or if a node appears in two overlapping \emph{time windows}, such as ($n_{x_i}^{tw_i}$, $n_{x_i}^{tw_j}$), or a \emph{time window} completely contained with the other ($\tau_{w_i} \leq \tau_{w_j}$ and $\beta_{tw_j} \leq \beta_{tw_i}$), as they are independent of each other. Moreover, a directed edge is formed between two vertices if the encounter end time $\beta_{tw_i}$ of a \emph{time window} is smaller than the encounter start time $\tau_{tw_j}$ of another \emph{time window}. Each edge between two nodes is assigned a weight based on the following criteria. For an undirected edge connecting two nodes from the same \emph{time window}, e.g., ($n_{a_i}^{tw}$, $n_{b_i}^{tw}$), the edge weight is set to $t_{tr}$. For other directed and undirected edges, the weights are calculated as the difference in encounter times between the two time windows ($\tau_{tw_j} - \tau_{tw_i}$). 

\textbf{Example:} 
For the UAV network in Fig. \ref{fig:UAV-net}, the constructed \emph{TWiG} is shown in Fig. \ref{fig:Full-Graph}.

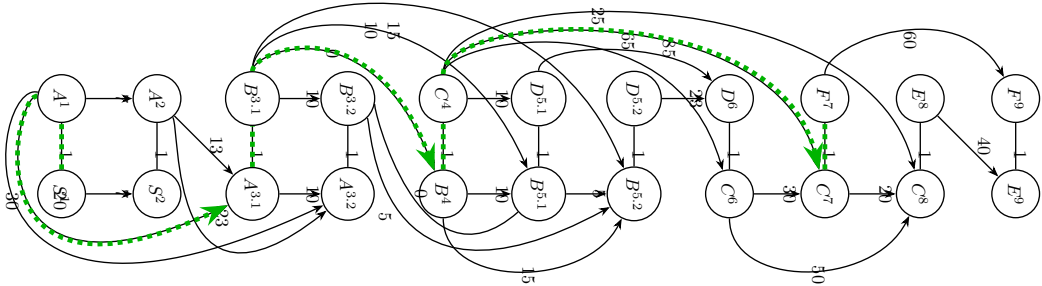
\begin{figure*}[ht]
    \centering
    \resizebox{\textwidth}{!}{%
    \begin{tikzpicture}[roundnode/.style={scale=1,circle, draw=black,  thick, minimum size=10mm}, rotate=90]
        \clip (-2, -20.6) rectangle (4.1, 1.3);  
        \node[roundnode, rotate=270] (circle) at (0,0) (S1) {$S^1$};
        \node[roundnode, rotate=270] (circle) at (2,0) (A1){$A^1$};
        \node[roundnode, rotate=270] (circle) at (0,-2) (S2) {$S^2$};
        \node[roundnode, rotate=270] (circle) at (2,-2) (A2){$A^2$};
        \node[roundnode, rotate=270] (circle) at (0,-4) (A3_1){$A^{3.1}$};
        \node[roundnode, rotate=270] (circle) at (2,-4) (B3_1) {$B^{3.1}$};
        \node[roundnode, rotate=270] (circle) at (0,-6) (A3_2){$A^{3.2}$};
        \node[roundnode, rotate=270] (circle) at (2,-6) (B3_2) {$B^{3.2}$};
        \node[roundnode, rotate=270] (circle) at (0,-8) (B4) {$B^4$};
        \node[roundnode, rotate=270] (circle) at (2,-8) (C4) {$C^4$};
        \node[roundnode, rotate=270] (circle) at (0,-10) (B5_1) {$B^{5.1}$};
        \node[roundnode, rotate=270] (circle) at (2,-10) (D5_1) {$D^{5.1}$};
        \node[roundnode, rotate=270] (circle) at (0,-12) (B5_2) {$B^{5.2}$};
        \node[roundnode, rotate=270] (circle) at (2,-12) (D5_2) {$D^{5.2}$};
        \node[roundnode, rotate=270] (circle) at (0,-14) (C6) {$C^6$};
        \node[roundnode, rotate=270] (circle) at (2,-14) (D6) {$D^6$};
        \node[roundnode, rotate=270] (circle) at (0,-16) (C7) {$C^7$};
        \node[roundnode, rotate=270] (circle) at (2,-16) (F7) {$F^7$};
        \node[roundnode, rotate=270] (circle) at (0,-18) (C8) {$C^8$};
        \node[roundnode, rotate=270] (circle) at (2,-18) (E8) {$E^8$};
        \node[roundnode, rotate=270] (circle) at (0,-20) (E9) {$E^9$};
        \node[roundnode, rotate=270] (circle) at (2,-20) (F9) {$F^9$};


        \draw[-, black, thick] (S1) -- (A1) node[midway, above, xshift = -5, yshift = -5,rotate = 270] {1};
        \draw[-, black, thick](S2) -- (A2) node[midway, above, xshift = -5, yshift = -5,rotate = 270] {1};
        \draw[-, black, thick](A3_1) -- (B3_1) node[midway, above, xshift = -5, yshift = -5,rotate = 270] {1};
        \draw[-, black, thick](A3_2) -- (B3_2) node[midway, above, xshift = -5, yshift = -5,rotate = 270] {1};
        \draw[-, black, thick](B4) -- (C4) node[midway, above, xshift = -5, yshift = -5,rotate = 270] {1};
        \draw[-, black, thick](B5_1) -- (D5_1) node[midway, above, xshift = -5, yshift = -5,rotate = 270] {1};
        \draw[-, black, thick](B5_2) -- (D5_2) node[midway, above, xshift = -5, yshift = -5,rotate = 270] {1};
        \draw[-, black, thick](C6) -- (D6) node[midway, above, xshift = -5, yshift = -5,rotate = 270] {1};
        \draw[-, black, thick](C7) -- (F7) node[midway, above, xshift = -5, yshift = -5,rotate = 270] {1};
        \draw[-, black, thick](C8) -- (E8) node[midway, above, xshift = -5, yshift = -5,rotate = 270] {1};
        \draw[-, black, thick](E9) -- (F9) node[midway, above, xshift = -5, yshift = -5,rotate = 270] {1};

        \draw[-, black, thick](B3_2) .. controls + (-0.1,-0.5) and + (-2.2,+2.5) .. (B5_1) node[midway, above, xshift = 0, yshift = -2, rotate = 270] {0};   
            
        \draw[->, -Stealth, black, thick] (S1) -- (S2) node[midway, above, rotate = 270] {7};
        \draw[->, -Stealth, black, thick] (A1) -- (A2) node[midway, above, rotate = 270] {7};
        \draw[->,-Stealth, black, thick](A1) .. controls + (+0.1,+1) and + (-2.8,+6) .. (A3_1) node[midway, above, xshift = 13, yshift = -3, rotate = 270] {20};
        \draw[->,-Stealth, black, thick](A1) .. controls + (+0.4,+1.5) and + (-4,+8.5) .. (A3_2) node[midway, above, xshift =-12, yshift = 6, rotate = 270] {30};
        
        \draw[->, -Stealth, black, thick] (A2) -- (A3_1) node[midway, above, rotate = 270] {13};
        \draw[->,-Stealth, black, thick](A2) .. controls + (-0.6,-0.5) and + (-3,+3.8) .. (A3_2) node[midway, above,  xshift = 10, yshift = -2, rotate = 270] {23};

        \draw[->, -Stealth, black, thick] (A3_1) -- (A3_2) node[midway, above, rotate = 270] {10};
        \draw[->, -Stealth, black, thick] (B3_1) -- (B3_2) node[midway, above, rotate = 270] {10};
        \draw[->,-Stealth, black, thick](B3_1) .. controls + (right:1) and + (+4,1.5) .. (B4) node[midway, above,  xshift = 0, yshift = -2, rotate = 270] {0} ;
        \draw[->,-Stealth, black, thick](B3_1) .. controls + (right:1.8) and + (+4.5,+2) .. (B5_1) node[midway, above,  xshift = 0, yshift = -2, rotate = 270] {10};
        \draw[->,-Stealth, black, thick](B3_1) .. controls + (right:2.5) and + (+5,+2) .. (B5_2) node[midway, above,  xshift = -15, yshift = -15, rotate = 270] {15};

        \draw[->,-Stealth, black, thick](B3_2) .. controls + (-0.5,-0.5) and + (-3,+5.5) .. (B5_2) node[midway, above,  xshift = -17, yshift = -2, rotate = 270] {5};

        \draw[->, -Stealth, black, thick] (B4) -- (B5_1) node[midway, above , rotate = 270] {10};
        \draw[->,-Stealth, black, thick](B4) .. controls + (left:2) and + (-2, 1) .. (B5_2) node[midway, above,  xshift = -2, yshift = -2, rotate = 270] {15};

        \draw[->,-Stealth, black, thick](C4) .. controls + (right:1.2) and + (+4.2, 1.2) .. (C6) node[midway, above,  xshift = -18, yshift = 45, rotate = 270] {35};
        \draw[->,-Stealth, black, thick](C4) .. controls + (right:1.8) and + (+4.2,+1) .. (C7) node[midway, above,  xshift = 0, yshift = -5, rotate = 270] {65};
        \draw[->,-Stealth, black, thick](C4) .. controls + (right:2.5) and + (+4.5,+1.1) .. (C8) node[midway, above,  xshift = -3, yshift = -20, rotate = 270] {85};

        \draw[->,-Stealth, black, thick](D5_1) .. controls + (right:1.2) and + (+1.2, 1.2) .. (D6) node[midway, above,  xshift = -15, yshift = 20, rotate = 270] {25};
        
        \draw[->, -Stealth, black, thick] (B5_1) -- (B5_2) node[midway, above, rotate = 270] {5};
        \draw[->, -Stealth, black, thick] (D5_2) -- (D6) node[midway, above, rotate = 270] {23};
    
        \draw[->, -Stealth, black, thick] (C4) -- (D5_1) node[midway, above, rotate = 270] {10};
        \draw[->, -Stealth, black, thick] (C6) -- (C7) node[midway, above , rotate = 270] {30};
        \draw[->,-Stealth, black, thick](C6) .. controls + (left:2) and + (-2, 1) .. (C8) node[midway, above,  xshift = 0, yshift = -2, rotate = 270] {50};
        \draw[->,-Stealth, black, thick](F7) .. controls + (right:1.2) and + (+2.2, 1.2) .. (F9) node[midway, above,  xshift = 0, yshift = -5, rotate = 270] {60};
        \draw[->, -Stealth, black, thick] (C7) -- (C8) node[midway, above, rotate = 270] {20};
        \draw[->, -Stealth, black, thick] (E8) -- (E9) node[midway, above,  xshift = 2, yshift = 0, rotate = 270] {40};

        \draw[-, green!70!black,  line width=1mm, dashed](S1) -- (A1);
        \draw[->,-Stealth, green!70!black,  line width=1mm, dashed](A1) .. controls + (+0.1,+1) and + (-2.8,+6) .. (A3_1);
        \draw[-, green!70!black,  line width=1mm, dashed](A3_1) -- (B3_1);
        \draw[->,-Stealth, green!70!black,  line width=1mm, dashed](B3_1) .. controls + (right:1) and + (+4,1.5) .. (B4);
        \draw[-, green!70!black,  line width=1mm, dashed](B4) -- (C4);
        \draw[->,-Stealth, green!70!black,  line width=1mm, dashed](C4) .. controls + (right:1.8) and + (+4.2,+1) .. (C7);
        \draw[-, green!70!black,  line width=1mm, dashed](C7) -- (F7);

\end{tikzpicture}
}
    \caption{Constructed \emph{TWiG} corresponding to the scenario in Fig. \ref{fig:UAV-net} }
    \label{fig:Full-Graph}

\end{figure*}

\RestyleAlgo{ruled}

\begin{algorithm}[ht]
\caption{\emph{Time-Window Graph} construction}\label{alg:Full-Knowledge-Graph}
\KwData{Time windows of the UAV network $Tw = \{tw_1 (n_{a_1},n_{b_1},\tau_1, \beta_1 ) , tw_2=(n_{a_2},n_{b_2},\tau_2, \beta_2),  \dots, tw_n=(n_{a_n},n_{b_n},\tau_n, \beta_n)$ \}}
\KwResult{Graph $G=(V, E, W)$}
$V=\{\} , E=\{\} $\;
\For{$tw_i= 1$ to $|Tw|$}{
    \For{$tw_j= 1$ to $|Tw|$}{
        \If{$tw_i \neq tw_j $ and $\tau_i < \tau_j$ and $\tau_j < \beta_i < \beta_j$ }{
        Replace $tw_i$ with two new time windows \;
        $tw_{i_1} = (n_{a_{i_1}}, n_{b_{i_1}}, \tau_i, \tau_j)$ \;
        $tw_{i_2} = (n_{a_{i_2}}, n_{b_{i_2}}, \tau_j, \beta_i)$ \;
        Replace $tw_j$ with two new time windows \;
        $tw_{j_1} = (n_{a_{j_1}}, n_{b_{j_1}}, \tau_j, \beta_i)$ \;
        $tw_{j_2} = (n_{a_{j_2}}, n_{b_{j_2}}, \beta_i, \beta_j)$ \;
        $tw_i =1 ; tw_j= 1$\;
        }
    }
  }
\For{$tw_i=1$ to $|Tw|$} {

Create two vertices $n_{a_i}^{tw_i}$ and $n_{b_i}^{tw_i}$ \;
$V \leftarrow V \cup n_{a_i}^{tw_i}$ \;
$V \leftarrow V \cup n_{b_i}^{tw_i}$ \;
Add an undirected edge between  $n_{a_i}^{tw_i}$ and $n_{b_i}^{tw_i}$ with $t_{tr}$ edge weight\;

$E \leftarrow E \cup (n_{a_i}^{tw_i}, n_{b_i}^{tw_i}, "U", t_{tr})$ \;
}
\For{each pair of nodes in $V: n_{a_i}^{tw_i}, n_{a_i}^{tw_j}$ }{
    \If{$\beta_{tw_i} \leq \tau_{tw_j}$}{
    Add a directed edge from $n_{a_i}^{tw_i}$ to $n_{a_i}^{tw_j}$ with $(\tau_{tw_j} - \tau_{tw_i})$ edge weight\;
    $E \leftarrow E \cup (n_{a_i}^{tw_i}, n_{a_i}^{tw_j}, "D", (\tau_{tw_j} - \tau_{tw_i}) )$\;
    }
    \If{$\tau_{tw_i} \leq \tau_{tw_j}$ and $\beta_{tw_j} \leq \beta_{tw_i} $}{
    Add an undirected edge between $n_{a_i}^{tw_i}$ to $n_{a_i}^{tw_j}$ with $(\tau_{tw_j} - \tau_{tw_i})$ edge weight\;
    
    $E \leftarrow E \cup (n_{a_i}^{tw_i}, n_{a_i}^{tw_j}, "U", (\tau_{tw_j} - \tau_{tw_i}) )$ \;
    }
}
\end{algorithm}

\vspace{0.05in}
\noindent
\textbf{Time complexity}
The \emph{TWiG} construction Algorithm \ref{alg:Full-Knowledge-Graph} has worst-case time complexity $\mathcal{O}(|Tw|^4)$. 
Specifically, checking for overlap and splitting the time windows $Tw$ takes time $\mathcal{O} 
(|Tw|^2)$, thus creating at most $|Tw|\cdot \frac{|Tw| -1}{2}$ vertices. 
The creation of vertices and undirected edges between two vertices in one time window takes $\mathcal{O}(|Tw|)$. 
Then, a directed or undirected edge is added between two nodes according to the time windows, which takes $\binom{|V|}{2} \in \mathcal{O}(|V|^2) \in \mathcal{O}(|Tw|^4)$. 

\noindent
\textbf{Space complexity:}
Since each time window contains two vertices, in the worst case, the total number of vertices is $\mathcal{O}(|Tw|^2)$, when each time window needs to be split into two. Thus, in a complete graph, where every pair of vertices is connected by an edge, the number of edges is $\mathcal{O}(|Tw|^4)$.

\vspace{-0.1in}
\subsection{Detecting Malicious Node(s)} \label{sec:detecting malicious nodes}
Algorithm \ref{alg:Detect-malicious-node} is proposed to detect malicious nodes in a global knowledge scenario. The algorithm works as follows. 

The Dijkstra algorithm is used on \emph{TWiG} to compute the shortest path ($SP$) from the source node to the destination, ensuring the minimum time required to transmit the packet. 
We prove the equivalence in Theorem \ref{thm: equivalence UAVnetwork-Twig}. 
Moreover, from the traversed path $p_m$, the receiver computes the path followed by the message $FP_m$ in \emph{TWiG}. If the total weight of the followed path matches that of the shortest path, all intermediate nodes are deemed benign, indicating that no time delay attack has occurred in that path. Otherwise, a time delay attack has occurred, and some malicious nodes exist in the path $FP_m$. 
The receiver node then examines the route followed by each intermediate node, starting from the nearest node to itself, denoted $n_i = d_m-1$ to the source node $s_m$. For each node, path $FP_{m}$ is compared to the shortest path $SP_{m}$ from that node to the destination. If the followed path does not match the shortest path, the node is identified as malicious. Upon detecting a malicious node, the evaluation of the remaining intermediate nodes preceding the malicious node is done by comparing their paths to the detected malicious node rather than to the destination. 

\noindent
\textbf{Example:} 
In Fig. \ref{fig:UAV-net}, UAV $S$ intends to send a packet $m$ to node $F$. If all intermediate nodes on the path are benign, the message is supposed to follow the path $p_m=\{S, A, B, C, F\}$. In \emph{TWiG} it is represented as $SP_m$=\{$S^1$, $A^1$, $A^{3.1}$, $B^{3.1}$, $B^4$, $C^4$, $C^7$, $F^7$\} as depicted by the green line in Fig. \ref{fig:Full-Graph}. Now, let us assume that the packet traverses the path $p'_m=\{S, A, B, D, C, F\}$, as there are some malicious nodes in the routing path. The corresponding $FP_m$ = \{ $S^1$, $A^1$, $A^{3.1}$, $A^{3.2}$, $B^{3.2}$, $B^{5.1}$, $D^{5.1}$, $D^6$, $C^6$, $C^7$, $F^7$ \} in \emph{TWiG}. 
Since $FP_m \neq SP_m$, node $F$ identifies that at least one intermediate node may be malicious. The node $F$ then systematically verifies the path followed by each node, starting from the closest node and moving back to the source node $S$. At each step, it compares the path followed with the corresponding shortest path and finds that $A$ is malicious.

\RestyleAlgo{ruled}
\begin{algorithm}[ht]
\caption{Detecting malicious nodes in global knowledge}
\label{alg:Detect-malicious-node}

\KwData{ Followed path by Message $m$ and \emph{TWiG} $G$}
\KwResult{Malicious nodes $\Psi$}
Identifying the Shortest Path $SP_m$ for $m$ by running Dijkstra's algorithm on the constructed \emph{TWiG} $G$ \;
Compute total weights ($W$) of $FP_m$  and $SP_m$\; 
Compare $W(FP_m)$ with the $W(SP_m)$ \;
$\Psi=\{\}$\;
\eIf{ $W(FP_m) == W(SP_m)$ }
{
    None of the intermediate nodes are malicious \;
    $\Psi \leftarrow  \emptyset$ \;
}{
    
    $n_i = d_m -1$ \;
    \While{ $n_i \neq s_m $}{
    
        Identifying $SP_{m}$ from $n_i$ to $d_m$  by running Dijkstra's algorithm \;
        \eIf{$FP_{m} \in SP_{m}$}{
            $n_i$ is a benign node.\;
            }{
             $n_i$ is a malicious node $\psi$ \;
             $\Psi \leftarrow \Psi \cup \psi_{n_i} $ \;
             $d_m = n_i$ \;
        }
         $n_i =  n_i - 1$ \;
    }
}

\end{algorithm}

\noindent
\textbf{Time complexity:} 
Dijkstra's algorithm has a time complexity of $\mathcal{O}(|V| + |E| \log |V|)$. Initially, the path followed is compared with the shortest path, which takes $\mathcal{O}(|V|)$ times. 
If the followed path does not match the shortest path, the shortest path is recalculated for each node in the path (within the while loop), which takes $\mathcal{O}(|V| \cdot (|V| + |E| \log |V|))$ time. Thus, the overall time complexity of Alg. \ref{alg:Detect-malicious-node} is $\mathcal{O}(|V| \cdot |E| \cdot \log |V|)$.

\begin{theorem}
\label{thm: equivalence UAVnetwork-Twig}
   For a UAV network with nodes $\mathcal{N}$, time windows $Tw$, and the corresponding Time Window Graph (TWiG) $G$, Algorithm \ref{alg:Detect-malicious-node} guarantees that any shortest path in the UAV network, with respect to total transmission time, is equivalent to a shortest path in $G$. 
\end{theorem}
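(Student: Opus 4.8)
The plan is to establish a bijective correspondence between paths in the UAV network (sequences of nodes connected by usable time windows, in chronological order) and directed/undirected paths in \emph{TWiG} that preserves total transmission time, and then argue that this correspondence carries shortest paths to shortest paths. First I would make precise what a ``path in the UAV network'' means in terms of time windows: a sequence $s_m = n_0, n_1, \ldots, n_\ell = d_m$ together with a choice of time windows $tw^{(1)}, \ldots, tw^{(\ell)}$ where $tw^{(j)}$ is a time window between $n_{j-1}$ and $n_j$, and such that the packet, having arrived at $n_{j-1}$ during $tw^{(j-1)}$, can still be forwarded during $tw^{(j)}$ — i.e. the encounter start times are nondecreasing along the path (after the splitting step, exactly the condition $\beta_{tw^{(j-1)}} \le \tau_{tw^{(j)}}$ or the containment condition used in Algorithm~\ref{alg:Full-Knowledge-Graph}). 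The total transmission time of such a path is $\sum_j (\text{waiting until } tw^{(j)}) + t_{tr}$ at the final hop, which I would rewrite as a telescoping sum of the quantities $\tau_{tw^{(j)}} - \tau_{tw^{(j-1)}}$ plus one $t_{tr}$ term.

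Next I would exhibit the map in both directions. Given a network path with its chosen time windows, map it to the walk in $G$ that, for each hop $j$, enters the gadget for $tw^{(j)}$ at the vertex $n_{j-1}^{tw^{(j)}}$, crosses the intra-window undirected edge (weight $t_{tr}$) to $n_j^{tw^{(j)}}$, and then follows the inter-window edge from $n_j^{tw^{(j)}}$ to $n_j^{tw^{(j+1)}}$ (weight $\tau_{tw^{(j+1)}} - \tau_{tw^{(j)}}$). By construction of the edge set, this edge exists precisely because the chronological-feasibility condition holds, so the walk is valid in $G$; summing the weights gives $\sum_j (\tau_{tw^{(j+1)}} - \tau_{tw^{(j)}}) + t_{tr}$, which telescopes to match the network transmission time up to the shared additive constant, so the correspondence is weight-preserving. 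Conversely, any $s_m$-to-$d_m$ path in $G$ alternates between intra-window edges and inter-window edges (one cannot take two inter-window edges in a row without an intervening node-switch, since inter-window edges connect copies of the \emph{same} node), and reading off the sequence of time windows visited yields a network path with nondecreasing encounter times; again weights match. The two maps are mutually inverse on the relevant classes of paths.

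Finally, since the correspondence is a weight-preserving bijection between $s_m$–$d_m$ paths in the network and $s_m$–$d_m$ paths in $G$, the minimum-weight path on one side maps to the minimum-weight path on the other; hence Dijkstra run on $G$ (as in Algorithm~\ref{alg:Detect-malicious-node}) returns a path whose node-projection is a shortest path in the UAV network with respect to total transmission time, and symmetrically every network shortest path lifts to a shortest path in $G$. The same argument applied with source $n_i$ and destination $d_m$ (or the updated destination after a malicious node is found) justifies the per-node comparisons inside the while loop.

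I expect the main obstacle to be the bookkeeping around the window-splitting step in Algorithm~\ref{alg:Full-Knowledge-Graph}: one must check that splitting an overlapping window into sub-windows neither creates spurious network paths nor destroys genuine ones, i.e. that a feasible forwarding through an original window still corresponds to a feasible forwarding through exactly one of its pieces, and that the containment/overlap cases in the edge-construction loop exactly capture chronological feasibility (including the subtle case where the packet arrives mid-window and the relevant ``wait'' is zero). Getting the telescoping-sum normalization right — precisely which $t_{tr}$ and which boundary terms appear — is the other place where care is needed, though it is routine once the path model is pinned down.
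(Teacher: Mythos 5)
Your proposal follows essentially the same route as the paper's proof: map each feasible sequence of encounters to a walk in \emph{TWiG} and verify that the edge weights (inter-window differences of encounter start times, plus the intra-window $t_{tr}$ edges) sum to the network transmission time, so that shortest paths correspond. The one substantive point where you go beyond the paper is the reverse direction of the correspondence --- that every $s_m$--$d_m$ path in $G$ projects back to a feasible network path --- which the paper's proof leaves implicit but which is genuinely needed to rule out Dijkstra returning a spuriously short path in $G$ with no network counterpart; your bijection-plus-telescoping formulation, together with the flagged bookkeeping around window splitting, is the more complete version of the same argument.
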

\begin{proof}
   We first prove that every feasible path in the UAV network has a corresponding path in the \emph{TWiG}.
Note that any feasible path in the UAV network consists of a sequence of UAV encounters with different nodes, each corresponding to a \emph{time window}.
The \emph{TWiG}, by construction, includes vertices for every \emph{time window} and connects them with edges if the corresponding \emph{time windows} are consecutive (directly or through overlaps).
Therefore, for every pair of consecutive encounters in the UAV network, there exists a corresponding edge in the \emph{TWiG}, ensuring that any feasible path in the UAV network can be mapped to a path in the \emph{TWiG}.

Then, we prove that the path lengths (representing time) in the \emph{TWiG} accurately reflect the time taken to traverse the corresponding paths in the UAV network.
First, we note that the \emph{TWiG} assigns weights to directed edges based on the difference in encounter times, accurately representing the time it takes for the UAV to transition between encounters in the UAV network.
Then, for undirected edges, i.e., for nodes within the same \emph{time window}, the edge weight is set to the transmission time \(t_{tr}\), reflecting the time for data transmission within that encounter.
Therefore, the total length of any path in the \emph{TWiG} (i.e., the sum of edge weights) accurately represents the total time taken from the message to traverse the corresponding sequence of encounters in the UAV network, including transmission times. 
\end{proof}

\subsection*{Example on Global Knowledge:}
In this example, two malicious nodes are present, each introducing a different delay time. For example, consider nodes $A$ and $C$ as malicious nodes in the network depicted in Fig. \ref{fig:UAV-net}. The delays imposed by $A$ and $C$ are $\epsilon=5s$ and $\epsilon=6s$, respectively. In this example $s_m = S$ and $d_m = F$ of message $m$. 
Therefore, $m$ needs to traverse intermediate nodes to reach its destination. Node $S$, after generating $m$, forwards it to the first node encountered, $A$, at time $\tau=5$. Node $A$ then saves and carries the message until it encounters $B$ at time $\tau=25$. Since $A$ is a malicious node, it imposes a five-second delay and sends the packet to $B$ at $Rt_B = 31$, considering one second for the transmission time. At this point, $B$ misses the opportunity to forward $m$ to $C$, and instead sends it to $D$,  which receives the message at $Rt_D=36$.
Node $D$ saves and carries the message until it encounters $C$ at $\tau = 60$ and forwards $m$ to it. Consequently, $C$ receives $m$ at $Rt_C = 61$ and holds the packet until it meets $F$. As $C$ is also a malicious node, it introduces a delay of $\epsilon=6s$, causing it to miss the opportunity to forward $m$ to $F$, the destination. If $D$ encounters any other nodes in the network, it will impose a delay before forwarding $m$, otherwise, the message will be dropped. Since $C$ meets $E$ at $\tau =110$ and they have sufficient time to communicate (with an encounter end time $\beta = 118$), $C$ forwards $m$ to $E$, which receives it at $Rt_E = 117$, and finally $m$ reaches its destination at $Rt_F = 151$. In this scenario, identifying both malicious nodes is challenging.

To identify malicious nodes, $F$ constructs the \emph{TWiG} using Alg. \ref{alg:Full-Knowledge-Graph}. Then by Alg \ref{alg:Detect-malicious-node}, $F$ identifies malicious nodes as follows: It compares the path followed by the packet, i.e., $FP_m =\{{S^1}, {A^1}, {A^{3.1}}, {A^{3.2}}, {B^{3.2}}, {B^{5.1}}, {D^{5.1}}, {D^6},{C^6}, {C^8}, {E^8}, {E^9}, {F^9} \}$ with the shortest path of the message, i.e., $SP_m$ =$\{S^1, A^1, A^{3.1}$, $B^{3.1}, B^4, C^4, C^7, F^7\}$.
Since $FP_m \neq SP_m$, node $F$ determines that at least one of the intermediate nodes is malicious. Then, it continues to check the path followed by each node, starting from the closest node to itself and moving back toward the source node, $S$. It compares each node's followed path with the shortest path from the point where the node received the packet to itself ($F$). $FP_{E^8} =\{{E^8}, {E^9}, {F^9} \}$, $SP_{E^8} =\{{E^8}, {E^9}, {F^9} \}$ and hence $FP_{E^8} == SP_{E^8}$. This indicates that $E$ is a benign node. The next node to check in the forwarded path is $C$. $FP_{C^6} =\{{C^6},{C^8},{E^8}, {E^9}, {F^9} \}$, $SP_{C^6} =\{{C^6},{C^7},{F^7} \}$ and hence $FP_{C^6} \neq SP_{C^6}$. 
According to these results, $C$ is identified as a malicious node. For the remaining nodes in $FP_m$, the shortest path is calculated up to the identified malicious node, $C$. Next node to examine is $D$. $FP_{D^{5.1}} =\{{D^{5.1}},{D^6},{C^6}\}$, $SP_{D^{5.1}} =\{{D^{5.1}},{D^6},{C^6} \}$ and hence $FP_{D^{5.1}} == SP_{D^{5.1}}$. 
This shows that $D$ behaves normally and is not a malicious node. Next node to check is $B$. $FP_{B^{3.2}} =\{{B^{3.2}}, {B^{5.1}}, {D^{5.1}},{D^6},{C^6}\}$, $SP_{B^{3.2}} =\{{B^{3.2}}, {B^{5.1}}, {D^{5.1}},{D^6},{C^6} \}$ and hence $FP_{B^{3.2}} == SP_{B^{3.2}}$. 
The result indicates that $B$ is a benign node. The next node to check is $A$. $FP_{A^1} =\{{A^1}, {A^{3.1}}, {A^{3.2}}, {B^{3.2}}, {B^{5.1}}, {D^{5.1}},{D^6},{C^6}\}$, $SP_{A^1} =\{{A^1},{A^{3.1}} , {B^{3.1}}, {B^4}, {C^4},{C^6} \}$, $FP_{A^1} \neq SP_{A^1}$.
Based on this result, $A$ is identified as a malicious node. As the next node in $FP_m$ is the source node, the path checking is terminated here. Therefore, following the proposed method, both malicious nodes ($A$ and $C$) are successfully identified in the network in just one iteration.

\section{TDA Detection using Local Knowledge}\label{sec:TDA-Local}

In this scenario, each node has only the information about its 2-hop neighborhood, 
which includes the neighbor ID and time-windows. So, it cannot construct the \emph{TWiG} for the global network.

When a node generates a message $m$ and attempts to send it to a remote node, the source node attaches both the source ($s_m$) and destination ($d_m$) node identifiers to the message, allowing the intermediate nodes to understand its origin and destination. The packet reaches the destination by broadcasting as the nodes do not know the global network.
Intermediate nodes attach specific information to the message about each relay node, labeled $info_{i}$ (as shown in Fig. \ref{fig:message-Info-A}), which contains the current node ID ($n_i$) and the time the message was received from the previous node, $Rt_{n_{i}}$. Instead of keeping information about all intermediate nodes, only information about the two most recent intermediate nodes is kept in the message at any given time. %
Based on this information, a node follows Alg. \ref{alg:Detect-malicious-node-local} to identify malicious nodes within its \emph{2-hop neighborhood}.

\RestyleAlgo{ruled}
\begin{algorithm}[ht]
\caption{Detecting malicious nodes in Local knowledge}
\label{alg:Detect-malicious-node-local}

\KwData{Message $m$ with meta data (info), $1Nb_{n_i}$, $2Nb_{n_i}$ }
\KwResult{Malicious Nodes $\Psi$}
$\partial_{n_i,n_{i-1}} = Rt_{n_i}$ - $\tau_{n_i, n_{i-1}}$\;
$2H = 0$; \quad \textit{//Checks if the packet has traversed at least two hops}\\
\eIf{ $ \partial_{n_i, n_{i-1}} == t_{tr}$}
{
    Node $n_{i-1}$ is a benign node \;
}{
    \eIf {$(Rt_{n_{i-1}} + t_{tr}) == Rt_{n_i}$}{
        Node $n_{i-1}$ is a benign node \;
    }{
        Node $n_{i-1}$ is a malicious node $\psi$\;
        $\Psi \leftarrow \psi_{n_{i-1}}$\;
    }
}
\If{ $(info_1 \neq \emptyset)$ \textbf{and} $(info_2 \neq \emptyset)$ }{
    ${2H} = 1$;
}
\If{$2H == 1 \space$ \textbf{and} $\space n_{i-2} \in 1Nb_{n_i}$}{
    \eIf{$(Rt_{n_{i-2}} + t_{tr}) < \beta_{n_i, n_{i-2}}$ 
    \textbf{and} $ (Rt_{n_{i-1}}  + t_{tr}) > \tau_{n_i, n_{i-2}} $}{
        Node $n_{i-2}$ is a malicious node $\psi$\;
        $\Psi \leftarrow \psi_{n_{i-2}}$
    }{
        Node $n_{i-2}$ is a benign node \;
    }
}
\end{algorithm}

When a node such as $n_i$, receives a message $m$ from $n_{i-1}$, it checks whether $n_{i-1}$ delayed the message by comparing the reception time $Rt_{n_i}$ with the encounter time $\tau_{n_i,  n_{i-1}}$ and verifying it with the transmission time $t_{tr}$. 
Next, $n_i$ attempts to determine if the delay is imposed by $n_{i-1}$ or if it comes from some previous nodes. 
If it finds that $n_{i-2}$ sent the packet to $n_{i-1}$ despite having the option to send it directly to $n_i$, then $n_{i-2}$ is identified as malicious; otherwise, $n_{i-1}$ is malicious.
Once a malicious node is detected, an alert is sent to \emph{2-hop neighboring nodes} to make them aware of it.

\noindent\textbf{Example:} 
Let us consider the scenario illustrated in Fig. \ref{fig:UAV-net}, where node $S$ intends to send a message to node $F$ by relaying it through intermediate nodes. Let us assume $t_{tr}=1s$; node $A$ is malicious and imposes a 5-second delay ($\epsilon=5$). It causes the next relay node ($B$) to miss the communication time window with $C$. Consequently, $B$ sends the message to the next encountering node, $D$, which subsequently forwards the packet to $C$ before finally reaching $F$. Each node follows the steps in Alg. \ref{alg:Detect-malicious-node-local} to verify the reliability of the transmission path and identify potential malicious node(s) in its \emph{neighborhood}. 
Upon receiving the packet from $A$, node $B$ calculates $\partial_{B,A} = Rt_B$ - $\tau_{B,A} = 31s - 25s = 6s$  and since $6s\neq t_{tr}$, $A$ is marked as suspicious node. To verify, $B$ further computes $Rt_A + t_{tr} = 6s+1s = 7s$; given that $7s \neq 31s$, $A$ is deemed malicious from the perspective of $B$. 
Moreover, $B$ checks the two-hop back node, $S$, to assess its state. Since $S \notin 1Nb_{B} $, there is no need for further investigation. $B$ then appends its information to the packet and forwards it to $D$. 
$B$ also broadcasts the $A$'s maliciousness status to \emph{2-hop neighborhood}.

\noindent
\textbf{Time complexity:} 
Algorithm \ref{alg:Detect-malicious-node-local} has $O(1)$ time complexity.

\begin{theorem}
A node $n_i$ can identify a preceding node $n_{i-1}$ as malicious if any of the following conditions hold:

\begin{enumerate}
    \item Node $n_{i-1}$ forwards a message $m$ outside the expected encounter time window $\tau_{n_i, n_{i-1}}$,  where the expected reception time at $n_i$ is  $Rt_{n_i} = \tau_{n_i, n_{i-1}} + t_{tr}$ (where $t_{tr}$ is the transmission time).
    \item The reception time at node $n_i$ does not match the expected reception time based on the transmission delay from $n_{i-1}$, where $Rt_{n_i} = Rt_{n_{i-1}} + \epsilon$.
    \item A two-hop neighbor $n_{i-2}$ of $n_i$ is also a one-hop neighbor ($n_{i-2} \in 1Nb_{n_i}$) and fails to forward the message within the expected encounter window. Specifically, $n_{i-2}$ is malicious if  $(Rt_{n_{i-2}} + t_{tr}) < \beta_{n_i, n_{i-2}}$ and $Rt_{n_{i-1}} + t_{tr} > \tau_{n_i, n_{i-2}}$, where $\tau_{n_i, n_{i-2}}$ and $\beta_{n_i, n_{i-2}}$ are the start and end of encounter time between $n_i$ and $n_{i-2}$, respectively.
\end{enumerate}
\end{theorem}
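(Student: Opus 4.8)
The plan is to verify each of the three conditions separately, showing in each case that a benign node would not have exhibited the observed behavior, so the behavior witnesses maliciousness. The underlying principle throughout is the TDA model of Section~\ref{sec:system_model}: a benign node $n_{i-1}$ forwards a packet to $n_i$ at the earliest feasible moment, namely at the encounter start time $\tau_{n_i,n_{i-1}}$ if the packet is already in hand, so that $n_i$ receives it at exactly $Rt_{n_i} = \tau_{n_i,n_{i-1}} + t_{tr}$; and if the packet arrived at $n_{i-1}$ during an ongoing encounter, it relays it immediately, giving $Rt_{n_i} = Rt_{n_{i-1}} + t_{tr}$. Any strictly larger reception time is, by definition of $\epsilon$ in Equations~\eqref{eq:TDA_s}--\eqref{eq:TDA_r}, a deliberately injected delay, hence malicious (recall benign link jitter is folded into $t_{tr}$ by assumption).

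For condition~1, I would argue by contrapositive: if $n_{i-1}$ is benign, then $Rt_{n_i}$ equals $\tau_{n_i,n_{i-1}} + t_{tr}$, i.e. $\partial_{n_i,n_{i-1}} := Rt_{n_i} - \tau_{n_i,n_{i-1}} = t_{tr}$; so if $n_i$ observes $\partial_{n_i,n_{i-1}} \neq t_{tr}$ (necessarily $>t_{tr}$, since $n_i$ cannot receive before the encounter window opens and transmission takes $t_{tr}$), the node $n_{i-1}$ must have withheld the packet past $\tau_{n_i,n_{i-1}}$, contradicting benignity. This matches the first branch of Alg.~\ref{alg:Detect-malicious-node-local}. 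For condition~2, I would cover the remaining case where the packet reached $n_{i-1}$ only after its encounter with $n_i$ had already begun (so the $\partial = t_{tr}$ test fails for an innocent reason): here a benign $n_{i-1}$ relays on arrival, giving $Rt_{n_i} = Rt_{n_{i-1}} + t_{tr}$; observing $Rt_{n_i} > Rt_{n_{i-1}} + t_{tr}$, i.e. a surplus $\epsilon = Rt_{n_i} - Rt_{n_{i-1}} - t_{tr} > 0$, again forces maliciousness. Together these two tests are exactly the nested \texttt{if} of the algorithm, and I would note that passing both tests certifies $n_{i-1}$ benign, so the classification is sound.

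For condition~3, the argument is about a node $n_{i-2}$ that is simultaneously a one-hop and a two-hop neighbor of $n_i$: $n_i$ knows the encounter window $[\tau_{n_i,n_{i-2}}, \beta_{n_i,n_{i-2}}]$ from its local 2-hop knowledge. A benign $n_{i-2}$, holding the packet, would forward it to whichever admissible next hop it first encounters; so if $n_{i-2}$ still held the packet at a time when its window with $n_i$ was open and there was enough slack ($t_{tr}$) before $\beta_{n_i,n_{i-2}}$, it should have handed the packet directly to $n_i$ rather than routing it through $n_{i-1}$. The two inequalities $(Rt_{n_{i-2}} + t_{tr}) < \beta_{n_i,n_{i-2}}$ and $(Rt_{n_{i-1}} + t_{tr}) > \tau_{n_i,n_{i-2}}$ formalize "the $n_i$--$n_{i-2}$ window was still usable after $n_{i-2}$ had the packet" and "the detour through $n_{i-1}$ was still in progress while that window was open," respectively; their conjunction means $n_{i-2}$ bypassed an available direct delivery to $n_i$, which a benign node would not do, so $n_{i-2}$ is malicious. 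I would close by remarking that this is precisely the final \texttt{if} block of Alg.~\ref{alg:Detect-malicious-node-local}, and that it requires the two-hop flag $2H=1$ (the packet has genuinely traversed $n_{i-2}\to n_{i-1}\to n_i$) so that $info_1,info_2$ carry the timestamps the inequalities reference.

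The main obstacle I anticipate is condition~3: making rigorous the claim that "a benign $n_{i-2}$ would have forwarded directly to $n_i$" requires pinning down the forwarding policy precisely — in particular that broadcasting/greedy relay means a benign node does not gratuitously prefer $n_{i-1}$ over $n_i$ when both are reachable and $n_i$ lies on (or no worse than) a viable route — and handling the timing bookkeeping of exactly when $n_{i-2}$ possessed the packet relative to the overlapping encounter windows. Conditions~1 and~2 are essentially immediate unwindings of the definitions; condition~3 is where the local-knowledge limitation and the subtlety of the SCF mechanism bite, and I would be careful to state the policy assumption explicitly rather than leave it implicit.
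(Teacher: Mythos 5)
Your proposal is correct and follows essentially the same route as the paper: each condition is verified separately by contrapositive, showing that a benign node's forwarding behavior would satisfy the negation of the corresponding detection test. If anything, your treatment is slightly cleaner than the paper's (you use $t_{tr}$ consistently in condition~2 where the paper's proof conflates it with $\epsilon$, and you explicitly state the greedy-relay policy assumption underlying condition~3, which the paper leaves implicit).
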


\begin{proof}
    We prove each condition to identify malicious behavior:

    \begin{enumerate}
        \item 
        Let $\tau_{n_i, n_{i-1}}$ be the expected encounter time window between nodes $n_i$ and $n_{i-1}$, $t_{tr}$ be the message transmission time, and $\epsilon$ be the transmission delay. If $n_{i-1}$ is benign, the reception time $Rt_{n_i}$ at node $n_i$ should satisfy
        \(
            Rt_{n_i} = \tau_{n_i,n_{i-1}} \space + \space t_{tr} + \epsilon.
        \)
        Note that this equality can be achieved only if $\epsilon=0$.
        Therefore, if $\epsilon > 0$ we have that
        \(
        Rt_{n_i} > \tau_{n_i,n_{i-1}} \space + \space t_{tr} + \epsilon
        \)
        and then we can derive that $n_{i-1}$ is malicious.

        \item
        If $n_{i-1}$ is benign, and given the transmission delay $\epsilon$, the reception time $Rt_{n_i}$ should satisfy:
        \(
        Rt_{n_i} = Rt_{n_{i-1}} \space + \space \epsilon.
        \)
        Therefore, if 
        \(
        Rt_{n_i} \neq Rt_{n_{i-1}} \space + \space \epsilon
        \)
        then $n_{i-1}$ is malicious.

        \item
        Let $1Nb_{n_i}$ represents the set of \emph{1-hop neighbors} of $n_i$, and let $\beta_{n_i, n_{i-2}}$ be the encounter end time between nodes $n_i$ and $n_{i-2}$. If a \emph{two-hop neighbor} $n_{i-2}$ of $n_i$ is also a \emph{one-hop neighbor} ($n_{i-2} \in 1Nb_{n_i}$) and is benign, then the following condition should hold:
        \(
        Rt_{n_{i-2}} + t_{tr} \geq \beta_{n_i, n_{i-2}} \quad \text{or} \quad Rt_{n_{i-1}} + t_{tr} \leq \tau_{n_i, n_{i-2}}.
        \)
        This means that either $n_{i-2}$ forwarded the message to $n_i$ directly within the expected encounter time, or $n_{i-1}$ received the message from $n_{i-2}$ before encountering $n_i$.
        
        Therefore, if
        \(
        Rt_{n_{i-2}} + t_{tr} < \beta_{n_i, n_{i-2}} \quad \text{and} \quad Rt_{n_{i-1}} + t_{tr} > \tau_{n_i, n_{i-2}}
        \)
        then $n_{i-2}$ is malicious. This is because $n_{i-2}$ had the opportunity to send the message to $n_i$ directly but chose a 'slower' path through $n_{i-1}$.
    \end{enumerate}
\end{proof}

\noindent
{\em Limitations of Local Knowledge:}
The local knowledge scenario, while applicable to global settings, suffers from limited awareness of malicious activity. 
Unlike global knowledge, where the destination node can pinpoint the malicious node, only nodes immediately near the malicious node can identify it in the local scenario. 
Other nodes can only verify local path adherence, remaining unaware of any previous deviations if the message arrives correctly. 
This restricted awareness can be mitigated by attaching the malicious node's ID to the message or broadcasting it, but the fundamental constraint on immediate and widespread knowledge persists.

\subsection*{Example on Local Knowledge}

This example follows the same scenario as discussed in the global knowledge context. Here, there are two malicious nodes in the path, and we demonstrate how the proposed method for local knowledge in a UAV network can effectively identify these malicious nodes.
In this example, the message follows the path: $S\rightarrow A\rightarrow B\rightarrow D\rightarrow C\rightarrow E\rightarrow F$. Each node, after receiving the message, follows the steps outlined in Algorithm \ref{alg:Detect-malicious-node-local} to verify the reliability of its neighbors and identify potential malicious nodes.
The first node to receive the message is $A$, which follows the algorithm steps: $\partial_{A,S} = Rt_A$ - $\tau_{A,S} =  6s - 5s = 1s$ which is equal to $t_{tr}$ i.e. $1s$, thus confirming $S$ as benign. Since $2H \neq 1 $, the process terminates here for $A$. The next node is $B$: $\partial_{B,A} = Rt_B$ - $\tau_{B,A} = 31s - 25s = 6s  $, and since $6s\neq t_{tr}$, $A$ is considered suspicious. Node $B$ then computes $Rt_A + t_{tr}= 6s+1s=7s$ and compares it with $ Rt_B =31s$. Since $7s \neq 31s$, $A$ is identified as a malicious node from $B$'s perspective. As $S \notin 1Nb_{B}$, the process terminates for $B$ as well.

\noindent
Node $D$ is the next intermediate node. It calculates $\partial_{D,B} = Rt_D$ - $\tau_{D,B}  = 36s - 35s = 1s $, which is equal to $t_{tr}$ confirming $B$ as benign. Since $A \notin 1Nb_{D}$, the process terminates for $D$.
Next, $C$ follows the algorithm steps: $\partial_{C,D} = Rt_C$ - $\tau_{C,D} = 61s - 60s = 1s $, and since this is equal to $t_{tr}$, $D$ is confirmed to be benign. Since $B \in Nb_{C}$, $C$ makes the following comparison. If equations $(Rt_{B} + t_{tr} < \beta_{C, B})\ and\ (Rt_{A} + t_{tr} > \tau_{C, B})$ hold, $B$ is identified as malicious from $D$'s perspective. Now, $Rt_{B}=31$, $\beta_{C, B}=29$, $Rt_{A}=36$ and $\tau_{C, B}=25$. Hence, $B$ is confirmed as benign from $C$'s perspective.
Node $E$ is the next intermediate node. $\partial_{E,C} = Rt_E$ - $\tau_{E,C} = 117s - 110s = 7s$ which is not equal to $t_{tr}$, confirming that $C$ is malicious. Since $B \notin Nb_{E}$, the process terminates for $E$.
The final node, $F$, the message destination, follows the verification steps: $\partial_{F,E} = Rt_F$ - $\tau_{F,E} =151s - 150s = 1s $, which is equal to $t_{tr}$, confirming $E$ as benign. Since $C \in 1Nb_{F}$, $F$ checks if $(Rt_{C} + t_{tr} < \beta_{F, C})\ and\ (Rt_{E} + t_{tr} > \tau_{F, C})$. Since $Rt_{C}=61$, $\beta_{F, C}=95$, $Rt_{E}=117$ and $\tau_{F, C}=90$, both conditions are satisfied and $C$ is identified as malicious from the perspective of $F$'. In this example, nodes $A$ and $C$ have been successfully identified as malicious using the proposed method.


\section{Performance Evaluation}\label{sec:Preformance-eval}

This section presents experimental results to evaluate the efficiency of \emph{DATAMUt}.
We compare the algorithms proposed for global and local knowledge with the HOTD algorithm \cite{ZHAI_2023_HOTD}, using the combination SVM+Kmeans.

\subsection{Simulation Environment}

The experiments use Python and Jupyter Notebook on a MacBook Pro equipped with an Apple M3 Pro chip, featuring a 12-core CPU, and 18 GB of RAM.
To validate the proposed algorithms, we performed experiments in five random scenarios, as shown in Table \ref{tab:scenario_table}. The simulation parameters are listed in Table \ref{tab:parameter_table}. 

\subsection{Performance Metrics}
We consider three metrics for evaluation: Network Overhead, Accuracy, and Execution Time.

\begin{table}
\centering
\begin{minipage}{0.36\linewidth} %
    \centering
    \resizebox{\linewidth}{!}{%
    \begin{tabular}{|c|c|c|c|}
        \hline
        \textbf{Scenario} & \textbf{$\mathcal{N}$} & \textbf{$Tw$} & \textbf{$\Psi$} \\ 
        \hline
        1 & 7 & 10 & 2  \\ 
        \hline
        2 & 10 & 20 & 2  \\ 
        \hline
        3 & 15 & 30 & 3  \\ 
        \hline
        4 & 20 & 40 & 4  \\ 
        \hline
        5 & 30 & 50 & 5  \\ 
        \hline
    \end{tabular}
    }
    \caption{Scenarios}
    \label{tab:scenario_table}
\end{minipage}%
$\quad$
\begin{minipage}{0.60\linewidth} 
    \centering
    \resizebox{\linewidth}{!}{%
    \begin{tabular}{|c|c|}
        \hline
        \textbf{Parameters} & \textbf{Values} \\ 
        \hline
        Number of UAVs & 7 - 30   \\ 
        \hline
        Number of towers & 2- 5\\
        \hline
        UAV Speed (m/s)& $10$   \\ 
        \hline
        Message size (bit) & $1400$   \\ 
        \hline
        Node Id size (bit) & $7$   \\ 
        \hline
        Reception time size (bit) & $13$   \\ 
        \hline
        Delay duration (s) & $5$    \\ 
        \hline
        Malicious node & $0-6$ \\ 
        \hline
        Packet transmission time (s) & $1$   \\ 
        \hline
        \emph{Time-window} duration (s) & 5-15\\
        \hline
    \end{tabular}
    }
    \caption{Simulation Parameters}
    \label{tab:parameter_table}
\end{minipage}
\end{table}

\subsection{Network Overhead}\label{sec:EOR}

In UAV networks, attaching additional information to messages can lead to network overhead, impacting overall performance. 
According to \cite{Zhai_2023_ETD, ZHAI_2023_HOTD}, the extra overhead ratio (EOR) is defined as 
\begin{equation} \nonumber 
   EOR = \frac{\sum_{i=1}^M \sum_{j=1}^{H_i} j \times A_i}{\sum_{i=1}^M  D_i \times H_i}
\end{equation}
where $M$ denotes the total number of transmitted messages, $H_i$ represents the hop count to route $m_i$ to the destination, $D_i$ is the size of the original message $m_i$, and $A_i$ denotes the size of the additional information attached by each intermediate forwarding node to $m_i$.

In the global knowledge approach, each node appends $20$ bits of information to the main message, in contrast to the $105$ bits required by the HOTD approach. Additionally, in local knowledge approach, the proposed method adds $20$ bits per hop, resulting in a total attachment of $40$ bits to the message, as only the information for up to two hops is attached. 
Fig. \ref{fig:EOR} illustrates the extra overhead ratio introduced to the network when transmitting a message from a source to a destination through varying numbers of intermediate nodes i.e. hop counts.
For instance, in a scenario where a message traverses five intermediate nodes before reaching its destination,  the EOR for HOTD is 0.225, whereas the EOR of the proposed algorithm is only 0.042 for global knowledge and  0.028 for local knowledge. The EOR imposed by the HOTD is five times higher than that of our proposed models. Moreover, in scenarios where a message needs to traverse more nodes, the EOR imposed by the HOTD method increases substantially, while it remains constant in the local knowledge method and grows slightly in the global knowledge method. Hence, the proposed methods significantly outperform the HOTD model in terms of network overhead. 

\begin{figure}[ht]
    \centering
    
\begin{tikzpicture}
    \begin{axis}[
        width=12cm, height=8cm,
        ymin=0, ymax=0.7,
        xmin=3, xmax=15,
        ylabel={EOR},
        xlabel={Hop count},
        xtick={4, 5, 7, 10, 15},
        legend style={at={(0.5,1.12)}, anchor=north, legend columns=3, draw=none},
        legend cell align={left},
        mark options={solid}
    ]
    \addplot[
        color=black!70, 
        mark=diamond*,
        mark options={scale=1.2},
        ultra thick
    ] coordinates {
        (4, 0.035) (5, 0.042) (7, 0.057) (10, 0.078) (15, 0.114)
    };
    \addplot[
        color=blue!70, 
        mark=*,
        mark options={scale=1.2},
        ultra thick
    ] coordinates {
        (4, 0.028) (5, 0.028) (7, 0.028) (10, 0.028) (15, 0.028)
    };
    \addplot[
        color=orange!120, 
        mark=square*,
        mark options={scale=1.2},
        ultra thick
    ] coordinates {
        (4, 0.1875) (5, 0.225) (7, 0.3) (10, 0.4125) (15, 0.6)
    };
    \legend{Global Knowledge, Local Knowledge, HOTD}
    \end{axis}
\end{tikzpicture}
    
    \caption{EOR for transmitting a message from a source to a destination with different hops in different scenarios.}
    \label{fig:EOR}
    
\end{figure}
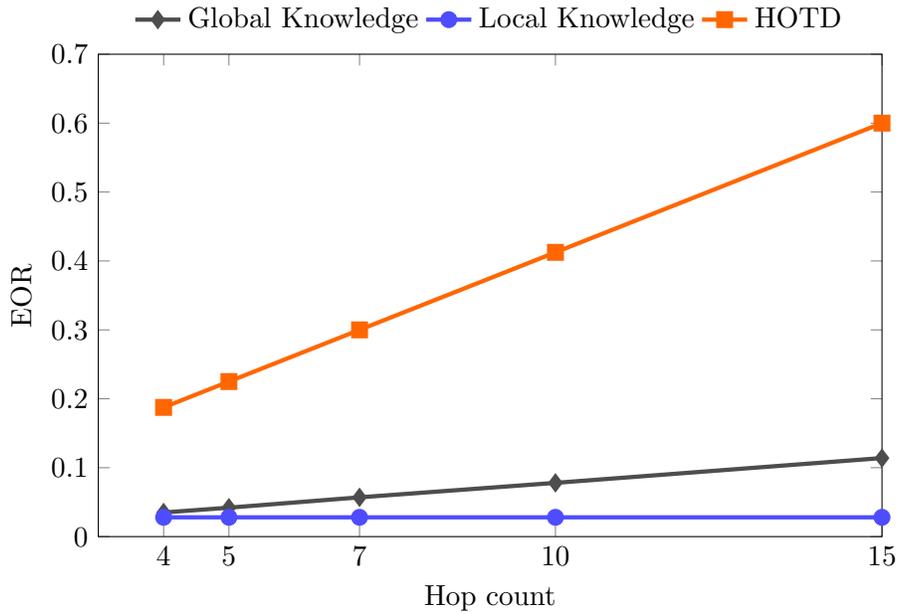

\subsection{Accuracy}
This metric illustrates the accuracy achieved by the proposed method in identifying malicious nodes within UAV networks. 
HOTD requires at least 100 and 480 iterations respectively, to attain an accuracy of around $80$\% and more than 90\%, respectively. In contrast, our proposed methods can consistently identify all existing malicious nodes within a single round in random scenarios. 

\subsection{Execution time}

In this section, we compare the execution times for local knowledge, global knowledge, and HOTD in various scenarios.
In Fig. \ref{fig:exec_time_all}, we plot the execution time for global and local knowledge with a varying number of nodes, time windows ($Tw$), and malicious nodes. 
We evaluate different time window configurations with a fixed number of nodes to demonstrate the effectiveness of the proposed algorithms in detecting malicious nodes under varying packet forwarding opportunities. 
The results indicate that increasing the number of time windows does not hinder the scalability of the proposed algorithms in terms of execution time. 
Moreover, modifying the duration of the time windows has no impact on the detection algorithms, as demonstrated by the time complexity analysis in Section \ref{sec:detecting malicious nodes}, confirming their scalability in different scenarios.

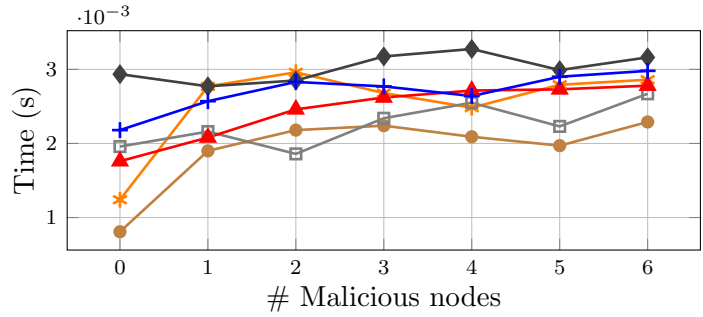
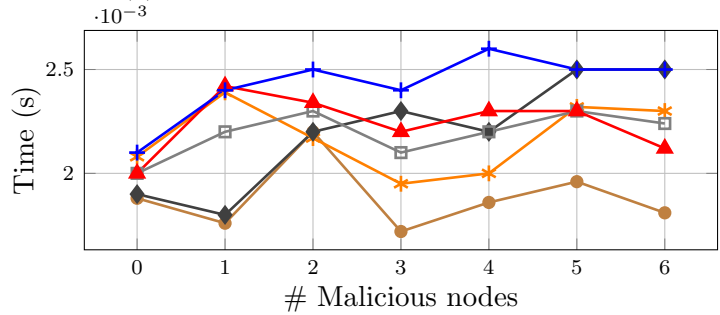
\begin{figure}[ht]
    \centering
    \begin{subfigure}[b]{0.47\textwidth}
        \centering       
\begin{tikzpicture}
		\begin{axis}[
            legend style={at={(0.45,1.15)}, anchor=south, legend columns=3, draw=none},
			every axis plot/.append style={line width=1 pt},
			xmajorgrids, ymajorgrids,
			width=10cm, height=4.5cm,
			xtick={0, 1, 2, 3, 4, 5, 6},
			every tick label/.append style={font=\scriptsize},
            xticklabel style={below},
            yticklabel style={left},
            xlabel={\# Malicious nodes}, 
            ylabel={Time (s)},             
            xlabel style={yshift= 1 mm, anchor=north}, 
            ylabel style={xshift= 0 mm, yshift= -2 mm, anchor=south}  
		]
		
		\addplot[brown, mark=*, error bars/.cd, y fixed,	y dir=both, y explicit] table [x index=0, y index=1] {
0 0.000808
1 0.0019
2 0.00218
3 0.00224
4 0.00209
5 0.00197
6 0.00229
};  \addlegendentry{$\mathcal{N}=20 , Tw=30$};

\addplot[orange, mark=asterisk, mark options={scale=1.5}, error bars/.cd, y fixed,	y dir=both, y explicit] table [x index=0, y index=1] {
0 0.00124
1 0.00277
2 0.00296
3 0.00268
4 0.002483
5 0.002792
6 0.00286
}; \addlegendentry{$\mathcal{N}=20 , Tw=40$};

\addplot[gray!150, mark=diamond*, mark options={scale=1.5}, error bars/.cd, y fixed,	y dir=both, y explicit] table [x index=0, y index=1] {
0 0.002935
1 0.002771
2 0.00285
3 0.003174
4 0.003275
5 0.00299
6 0.003162

}; \addlegendentry{$\mathcal{N}=20 , Tw=50$};

		\addplot[gray!100, mark=square, error bars/.cd, y fixed,	y dir=both, y explicit] table [x index=0, y index=1] {
0 0.001958
1 0.00216
2 0.001859
3 0.00234
4 0.00255
5 0.00223
6 0.00267
}; \addlegendentry{$\mathcal{N}=30 , Tw=30$};

\addplot[red, mark=triangle*, mark options={scale=1.5}, error bars/.cd, y fixed,	y dir=both, y explicit] table [x index=0, y index=1] {
0 0.00176
1 0.00208
2 0.00246
3 0.00262
4 0.002714
5 0.00273
6 0.00278
}; \addlegendentry{$\mathcal{N}=30 , Tw=40$};

\addplot[blue, mark=+, mark options={scale=1.5}, error bars/.cd, y fixed,	y dir=both, y explicit] table [x index=0, y index=1] {
0 0.00218
1 0.00257
2 0.00283
3 0.00277
4 0.00264
5 0.0029
6 0.00298
}; \addlegendentry{$\mathcal{N}=30 , Tw=50$};

			\end{axis}
	\end{tikzpicture}
        
        \caption{Global Knowledge}
        \label{fig:execution_time_global}
    \end{subfigure}
    \\
    \begin{subfigure}[b]{0.47\textwidth}
        \centering
        
\begin{tikzpicture}

		\begin{axis}[
			every axis plot/.append style={line width=1 pt},
			xmajorgrids, ymajorgrids,
			width=10cm, height=4.5cm,
			xtick={0, 1, 2, 3, 4, 5, 6},
			every tick label/.append style={font=\scriptsize},
            xticklabel style={below},
            yticklabel style={left},
            xlabel={\# Malicious nodes}, 
            ylabel={Time (s)},             
            xlabel style={yshift= 1 mm, anchor=north}, 
            ylabel style={xshift= 0 mm, yshift= -2 mm, anchor=south}  
		]
		\addplot[brown, mark=*, error bars/.cd, y fixed,	y dir=both, y explicit] table [x index=0, y index=1] {
0 0.00188
1 0.00176
2 0.0022
3 0.00172
4 0.00186
5 0.00196
6 0.00181
};

\addplot[orange, mark=asterisk, mark options={scale=1.5}, error bars/.cd, y fixed,	y dir=both, y explicit] table [x index=0, y index=1] {
0 0.00208
1 0.00239
2 0.00217
3 0.00195
4 0.002
5 0.00232
6 0.0023
};

\addplot[gray!150, mark=diamond*, mark options={scale=1.5}, error bars/.cd, y fixed,	y dir=both, y explicit] table [x index=0, y index=1] {
0 0.0019
1 0.0018
2 0.0022
3 0.0023
4 0.0022
5 0.0025
6 0.0025
}; 

		\addplot[gray!100, mark=square, error bars/.cd, y fixed,	y dir=both, y explicit] table [x index=0, y index=1] {
0 0.002
1 0.0022
2 0.0023
3 0.0021
4 0.0022
5 0.0023
6 0.00224
};

\addplot[red, mark=triangle*, mark options={scale=1.5}, error bars/.cd, y fixed,	y dir=both, y explicit] table [x index=0, y index=1] {
0 0.0020
1 0.00242
2 0.00234
3 0.0022
4 0.0023
5 0.0023
6 0.00212
}; 

\addplot[blue, mark=+, mark options={scale=1.5}, error bars/.cd, y fixed,	y dir=both, y explicit] table [x index=0, y index=1] {
0 0.0021
1 0.0024
2 0.0025
3 0.0024
4 0.0026
5 0.0025
6 0.0025
};

			\end{axis}
	\end{tikzpicture}
        
        \caption{Local Knowledge}
        \label{fig:execution_time_local}
    \end{subfigure}
    
    \caption{Execution time of Global and Local knowledge approaches.}
    \label{fig:exec_time_all}
    \vspace{-0.1in}
\end{figure}

Furthermore, in Fig. \ref{fig:exec_time}, we compare the execution time for the global and local knowledge approaches with HOTD. The results show that the proposed approaches require considerably less time compared to HOTD, as HOTD requires multiple iterations. Specifically, the Global Knowledge approach is approximately 860 times faster, and the Local Knowledge approach is about 1050 times faster than HOTD.  Furthermore, among the two of our proposed methods, the execution time for the global knowledge approach is approximately {22\%} higher than that for the local knowledge approach, as the global knowledge method requires the construction of \emph{TWiG}.

\begin{figure}[ht]
    \centering

\begin{tikzpicture}
    \begin{axis}[
        ymode=log,
        width=12cm, height=7cm,
        ymin=0.0005, ymax=10,
        xmin=1, xmax=5,
        ylabel={Time (s)},
        xlabel={Scenario},
        xtick={1, 2, 3, 4, 5},
        legend style={at={(0.5,1.12)}, anchor=north, legend columns=3, draw=none},
        legend cell align={left},
        mark options={solid}
    ]
    \addplot[
        color=black!70, 
        mark=diamond*,
        mark options={scale=1.2},
        ultra thick
    ] coordinates {
        (1, 0.001236) (2, 0.001674) (3, 0.001963) (4, 0.002483) (5, 0.0029)
    };
    \addplot[
        color=blue!70, 
        mark=*,
        mark options={scale=1.2},
        ultra thick
    ] coordinates {
        (1, 0.00096) (2, 0.00127) (3, 0.00166) (4, 0.002) (5, 0.0025)
    };
    \addplot[
        color=orange!120, 
        mark=square*,
        mark options={scale=1.2},
        ultra thick
    ] coordinates {
        (1, 0.2032) (2, 0.5438) (3, 0.9662) (4, 1.11) (5, 5.899)
    };
    \legend{Global Knowledge, Local Knowledge, HOTD}
    \end{axis}
\end{tikzpicture}
    
    \caption{Execution time for malicious node detection in different scenarios.}
    \label{fig:exec_time}
\end{figure}
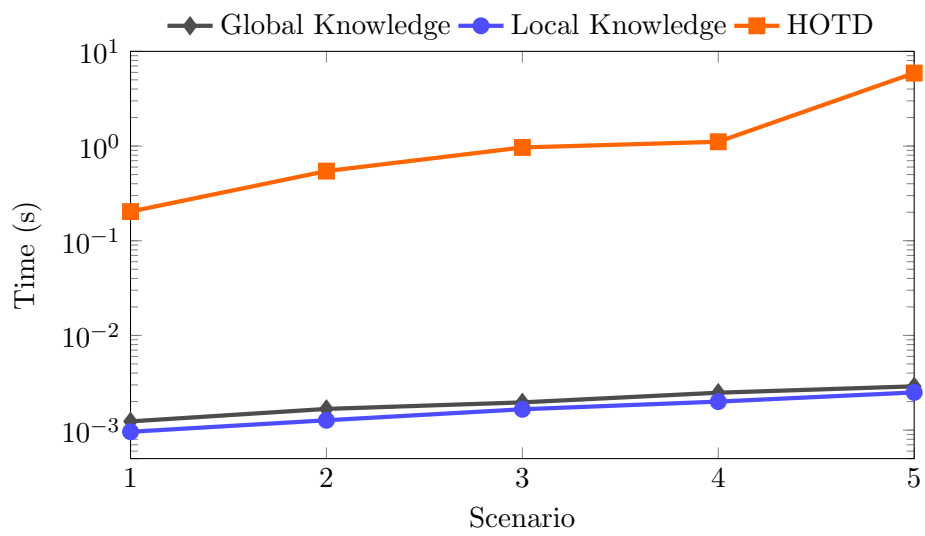

\section{Conclusion}\label{sec:Conclusion}
Time delay attacks can severely disrupt time-sensitive operations in UAV networks. This paper introduces a novel TDA detection method, \emph{DATAMUt}, applicable to both global and local knowledge scenarios. This approach achieves high detection accuracy with low computational and message overhead, enhancing the security and reliability of UAV networks.

Future research will focus on refining and extending this method by incorporating more realistic scenarios, such as variable message transmission times due to factors like network congestion, distance variations, and environmental conditions. 
Also, malicious nodes can exhibit varying behaviors depending on different scenarios and messages. A malicious node may alter the duration of the imposed delay at different times to remain anonymous. 
Additionally, we will explore advanced detection algorithms, adaptive thresholding mechanisms, and resilience strategies to mitigate the impact of detected attacks.

\clearpage
\bibliographystyle{plain} 
\bibliography{citation}
\end{document}